\documentclass[11pt]{article}
\usepackage{algorithm}
\usepackage{algorithmic}
\usepackage{amsfonts}
\usepackage{amsmath}
\usepackage{amssymb}
\usepackage{amsthm}
\usepackage{bbm}
\usepackage{float}
\newfloat{algorithm}{t}{lop}
\usepackage[margin=1in]{geometry}
\usepackage{graphicx}
\usepackage[utf8]{inputenc}
\usepackage{mathtools}
\usepackage[numbers]{natbib}
\usepackage[dvipsnames]{xcolor}
\usepackage{tikz}
\usepackage{thmtools}
\usepackage{thm-restate}
\usepackage{wrapfig}

\usepackage{hyperref}
\hypersetup{
    colorlinks=true,
    linkcolor=blue,
    urlcolor=blue,
    citecolor = blue
}

\usepackage[capitalise]{cleveref}
\creflabelformat{equation}{#2#1#3}

\newtheorem{definition}{Definition}
\newtheorem{lemma}{Lemma}

\newcommand{\A}{\mathcal{A}}
\newcommand{\aggtree}{\textsl{AggTree}}
\newcommand{\algo}{\textsl{JointExp}}
\newcommand{\apq}{\textsl{AppIndExp}}

\newcommand{\csmooth}{\textsl{CSmooth}}

\newcommand{\eps}{\varepsilon}

\newcommand{\lap}[1]{\mathsf{Lap}\left(#1\right)}
\newcommand{\lln}[1]{\mathsf{LLN}\left(#1\right)}

\newcommand{\num}[2]{\mathrm{count}_#2(#1)}

\renewcommand{\P}[2]{\mathbb{P}_{#1}\left[#2\right]}
\newcommand{\Po}[1]{\mathbb{P}_{#1}}

\newcommand{\pq}{\textsl{IndExp}}

\renewcommand{\S}{\mathcal{S}}

\newcommand{\X}{\mathcal{X}}
\newcommand{\Y}{\mathcal{Y}}

\newcommand{\arxiv}[1]{#1}
\newcommand{\narxiv}[1]{}

\title{Differentially Private Quantiles}
\author{Jennifer Gillenwater\thanks{Google New York, jengi@google.com}\and Matthew Joseph\thanks{Google New York, mtjoseph@google.com} \and Alex Kulesza\thanks{Google New York, kulesza@google.com}}

\begin{document}
\maketitle
\begin{abstract}
    Quantiles are often used for summarizing and understanding data. If that data is sensitive, it may be necessary to compute quantiles in a way that is differentially private, providing theoretical guarantees that the result does not reveal private information. However, when multiple quantiles are needed, existing differentially private algorithms fare poorly: they either compute quantiles individually, splitting the privacy budget, or summarize the entire distribution, wasting effort. In either case the result is reduced accuracy. In this work we propose an instance of the exponential mechanism that simultaneously estimates exactly $m$ quantiles from $n$ data points while guaranteeing differential privacy. The utility function is carefully structured to allow for an efficient implementation that returns estimates of all $m$ quantiles in time $O(mn\log(n) + m^2n)$. Experiments show that our method significantly outperforms the current state of the art on both real and synthetic data while remaining efficient enough to be practical.
\end{abstract}

\section{Introduction}
Quantiles are a widespread method for understanding real-world data, with example applications ranging from income~\cite{SKCM20} to birth weight~\cite{C01} to standardized test scores~\cite{G20}. At the same time, the individuals contributing data may require that these quantiles not reveal too much information about individual contributions. As a toy example, suppose that an individual joins a company that has exactly two salaries, and half of current employees have one salary and half have another. In this case, publishing the exact median company salary will reveal the new employee's salary.

\emph{Differential privacy}~\cite{DMNS06} offers a solution to this problem. Informally, the distribution over a differentially private algorithm's outputs must be relatively insensitive to the input of any single data contributor. Returning to the salary example, a differentially private method for computing the median company salary would have similar-looking output distributions regardless of which salary the new employee receives. The resulting uncertainty about any single contributor's data makes the algorithm ``private''.

In this work, we study differentially private estimation of user-specified quantiles $q_1, \ldots, q_m \in [0,1]$ for a one-dimensional dataset $X$ of size $n$.  The output quantile estimates consist of $m$ values, which we denote $o_1, \ldots, o_m$.  Ideally, the $o_j$ are as close to the dataset's actual quantiles as possible. For example, if $q_j = 0.5$, then our goal is to output $o_j$ close to the median of $X$.

Several algorithms for computing a single differentially private quantile exist (see \cref{sec:experiments}). These naturally extend to multiple quantiles using composition. Basic composition says that, if we estimate each of $m$ quantiles via an $\tfrac{\eps}{m}$-differentially private algorithm, then we will obtain $\eps$-differential privacy overall for the set of $m$ quantiles. However, the cost of this generality is the smaller and more restrictive privacy budget $\tfrac{\eps}{m}$ (or roughly $\tfrac{\eps}{\sqrt{m}}$ for ``advanced'' composition). As a result, this approach yields significantly less accurate outcomes as $m$ grows. This is unfortunate, as many applications rely on multiple quantiles: returning to the opening paragraph, the income statistics use $m=4$ (quintiles), the birth weight statistics use $m=9$ (deciles), and the test score statistics use $m > 30$. Alternatively, there exist methods for computing a differentially private summary of the entire distribution from which any quantile can subsequently be estimated (see \cref{subsec:rel}). However, unless $m$ is very large, such summaries will usually contain more information than needed, reducing accuracy.

\subsection{Contributions}
\label{subsec:contrib}
\begin{enumerate}
    \item We give an instantiation of the exponential mechanism~\cite{MT07}, \algo, that produces an $\eps$-differentially private collection of $m$ quantile estimates in a single invocation (\cref{subsec:initial}).  This mechanism uses a utility function that has sensitivity $2$ no matter how many quantiles are requested, and does not need to divide $\eps$ based on the number of quantiles.
    \item We provide a dynamic program, related to algorithms used for inference in graphical models, to implement \algo~in time $O(mn\log(n) + m^2n)$ (\cref{subsec:finite}, \cref{subsec:fb})\arxiv{\footnote{The first version of this paper did not include the FFT optimization and thus had runtime $O(mn^2 + m^2n)$.}}. This significantly improves on naive sampling, which requires time $O(n^m)$. 
    \item We experimentally evaluate \algo~and find that it obtains much better accuracy than the existing state-of-the-art while remaining efficient enough to be practical for moderate dataset sizes (\cref{sec:experiments}).
\end{enumerate}

\subsection{Related Work}
\label{subsec:rel}
Discussion of single-quantile estimation algorithms appears in Section~\ref{sec:experiments}. At the other end of the spectrum, one can use private CDF estimation or private threshold release to estimate arbitrarily many quantiles. These approaches avoid splitting $\eps$ as $m$ grows but suffer from the need to set hyperparameters depending on the discretization of the domain and assumptions about the data distribution. Moreover, the best known algorithms for threshold release rely on several reductions that limit their practicality \cite{BNSV15, KLMNS20}. A common tree-based approach to CDF estimation is included in our experiments.

Our algorithm relies on dynamic programming to sample from the exponential mechanism.\narxiv{~\citet{BDB16}} \arxiv{Blocki, Datta, and Bonneau \cite{BDB16}} studied how to release the counts (but not identities) of items in a dataset by constructing a relaxation of the exponential mechanism and sampling from it using dynamic programming. However, their utility function more simply decomposes into individual terms without pairwise interactions, and it is not clear how this method can be applied to quantiles.

Finally, our dynamic program for sampling from \algo's exponential mechanism is related to inference algorithms for graphical models. Several papers have studied differential privacy with graphical models. However, this has typically meant studying private versions of graphical modeling tasks~\cite{WM10, BMSSHM17} or using graphical models as a step in private algorithms~\cite{MSM19}. Our paper departs from that past work in that its dynamic program, while related to the forward-backward algorithm, does not have any conceptual dependence on graphical models themselves.
\section{Preliminaries}
\label{sec:prelims}
We view databases $X, X'$ as multisets of elements from some data domain $\X$ where each individual contributes at most one element to the database. To reason about databases that are ``close'', differential privacy uses \emph{neighbors}.
\begin{definition}
    Databases $X$ and $X' \in \X^n$ are \emph{neighbors}, denoted $X \sim X'$, if they differ in at most one element.
\end{definition}
Note that we use the \emph{swap} definition of differential privacy; in contrast, the \emph{add-remove} definition allows the addition or removal (rather than exchange) of one element between neighboring databases. We do this for consistent evaluation against the smooth-sensitivity framework (see \cref{sec:app_comparisons}), which also uses swap differential privacy. However, we emphasize that our algorithm \algo~easily adapts to the add-remove framework (in fact, its sensitivity is lower under add-remove privacy).

With the notion of neighboring databases in hand, we can now define differential privacy.
\begin{definition}[\narxiv{\citet{DMNS06}}\arxiv{Dwork, McSherry, Nissim, and Smith~\cite{DMNS06}}]
    A randomized algorithm $\A \colon \X^* \to \Y$ is $(\eps,\delta)$-differentially private if, for every pair of neighboring databases $X, X'$ and every output subset $Y \subseteq \Y$, 
    $$\P{\A}{A(X) \in Y} \leq e^\eps\P{\A}{A(X') \in Y} + \delta.$$
    When $\delta > 0$, we say $\A$ satisfies \emph{approximate} differential privacy. If $\delta = 0$, we say $\A$ satisfies \emph{pure} differential privacy, and shorthand this as $\eps$-differential privacy (or $\eps$-DP).
\end{definition}
A key benefit of differential privacy is composition: an algorithm that relies on differentially private subroutines inherits an overall privacy guarantee by simply adding up the privacy guarantees of its components.
\begin{lemma}[\citet{DMNS06}]
\label{lem:comp}
    Let $\A_1, \ldots, \A_k$ be $k$ algorithms that respectively satisfy $(\eps_1, \delta_1)$-$, \ldots, (\eps_k, \delta_k)$-differential privacy. Then running $\A_1, \ldots, \A_k$ satisfies $\left(\sum_{i=1}^k \eps_i, \sum_{i=1}^k \delta_i\right)$-differential privacy.
\end{lemma}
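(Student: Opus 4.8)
The plan is to reduce to the two-algorithm case by induction on $k$ and then handle that case by decomposing each mechanism's output distribution. The base case $k=1$ is the hypothesis. For the inductive step, running $\A_1,\ldots,\A_k$ is the same --- as a function of the algorithms' internal randomness --- as first running the bundled mechanism $\A_{1:k-1} := (\A_1,\ldots,\A_{k-1})$ and then running $\A_k$; by the inductive hypothesis $\A_{1:k-1}$ is $\big(\sum_{i<k}\eps_i,\ \sum_{i<k}\delta_i\big)$-DP, so it remains to show that composing an $(\eps,\delta)$-DP mechanism with an $(\eps',\delta')$-DP mechanism gives $(\eps+\eps',\ \delta+\delta')$-DP. (The same argument covers adaptive composition, where $\A_k$ may depend on $O_1,\ldots,O_{k-1}$; the non-adaptive statement above is all this paper uses, and always with $\delta_i=0$.)

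For the two-mechanism claim, fix neighbors $X\sim X'$ and let $P_i,Q_i$ be the output distributions of $\A_i(X)$ and $\A_i(X')$. The key fact is that ``$\A_i$ is $(\eps_i,\delta_i)$-DP'' is equivalent to saying that the positive part of the signed measure $P_i - e^{\eps_i}Q_i$ has total mass at most $\delta_i$ (the worst-case event $S$ in $P_i(S)\le e^{\eps_i}Q_i(S)+\delta_i$ is the set where the density of $P_i$ exceeds $e^{\eps_i}$ times that of $Q_i$, with densities taken against a common dominating measure). So I can write $P_i = P_i' + R_i$, where $P_i' := P_i \wedge e^{\eps_i}Q_i \le e^{\eps_i}Q_i$ as measures, $R_i$ is nonnegative with $\norm{R_i}\le\delta_i$, and $\norm{P_i'}\le\norm{P_i}=1$. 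The composite mechanism's output distribution on $X$ is $P_1\otimes P_2$, which expands as $P_1'\otimes P_2' + P_1'\otimes R_2 + R_1\otimes P_2$. For any event $Y$: $(P_1'\otimes P_2')(Y)\le e^{\eps+\eps'}(Q_1\otimes Q_2)(Y)$, since the product of nonnegative measures is monotone in each factor and $P_i'\le e^{\eps_i}Q_i$; $(P_1'\otimes R_2)(Y)\le\norm{P_1'}\,\norm{R_2}\le\delta'$; and $(R_1\otimes P_2)(Y)\le\norm{R_1}\,\norm{P_2}\le\delta$. Summing yields $\P{}{\A(X)\in Y}\le e^{\eps+\eps'}\,\P{}{\A(X')\in Y}+\delta+\delta'$, and since ``neighbor'' is symmetric the same computation with $X,X'$ swapped gives the reverse inequality, so the composite is $(\eps+\eps',\ \delta+\delta')$-DP. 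Induction then gives the claimed $\big(\sum_i\eps_i,\ \sum_i\delta_i\big)$ bound.

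The main obstacle I anticipate is measure-theoretic bookkeeping rather than conceptual difficulty: when $\delta_i>0$, $P_i$ need not be absolutely continuous with respect to $Q_i$, so the splitting $P_i = P_i'+R_i$ must be justified via the Lebesgue/Hahn decomposition of $P_i$ relative to $e^{\eps_i}Q_i$, and the product-measure inequalities should be checked on a common product dominating measure; some care is also needed to see that grouping the cross terms as $R_1\otimes P_2$ (rather than $R_1\otimes P_2'$ plus a leftover $R_1\otimes R_2$) is what avoids an extra $\delta\delta'$ term, which is precisely why the two-way reduction, rather than a direct $k$-fold expansion, is the right structure. All of this is vacuous in the pure-DP case $\delta_i=0$ used throughout the paper: there $R_i=0$, the claim reduces to $dP_i/dQ_i\le e^{\eps_i}$ pointwise, and multiplying the two pointwise bounds gives density ratio at most $e^{\eps+\eps'}$ for the product, i.e. pure $(\eps+\eps')$-DP. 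A second, routine point is checking that the decomposition argument survives when $\A_k$ is adaptive, which it does because the definition of adaptive composition requires $\A_k(\cdot\,;\,o_1,\ldots,o_{k-1})$ to be $(\eps_k,\delta_k)$-DP for every fixed transcript $o_1,\ldots,o_{k-1}$, making the decomposition of $\A_k$'s conditional output distribution uniform over the transcript.
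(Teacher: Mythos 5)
The paper does not prove this lemma at all---it is imported as a citation (to \citet{DMNS06}; the $(\eps,\delta)$ version is the standard composition theorem proved, e.g., in Dwork and Roth's monograph)---so there is no in-paper argument to compare against. Your proof is correct and is the standard one: the characterization of $(\eps_i,\delta_i)$-DP as the positive part of $P_i - e^{\eps_i}Q_i$ having mass at most $\delta_i$, the decomposition $P_i = P_i' + R_i$ with $P_i' \le e^{\eps_i}Q_i$, and the grouping of the expansion as $P_1'\otimes P_2' + P_1'\otimes R_2 + R_1\otimes P_2$ (which, as you note, is what avoids a spurious $\delta_1\delta_2$ term) are exactly how basic composition is established, and your induction and the conditional-decomposition remark for the adaptive case are sound; for the paper's actual use (non-adaptive, $\delta_i=0$) the argument indeed collapses to multiplying pointwise density-ratio bounds.
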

We will use composition (or its ``advanced'' variants) when evaluating methods that estimate a set of $m$ quantiles by estimating each quantile individually. By \cref{lem:comp}, to achieve overall $\eps$-DP, it suffices to estimate each quantile under $\tfrac{\eps}{m}$-DP. However, since our algorithm \algo~estimates all quantiles in one invocation, it does not use composition.

We will also rely on the exponential mechanism, a common building block for differentially private algorithms.
\begin{definition}[\citet{MT07, DR14}]
\label{def:exp}
    Given utility function $u \colon \X^* \times O \to \mathbb{R}$ mapping $(\text{database}, \text{output})$ pairs to real-valued scores with $L_1$ sensitivity
    \narxiv{$\Delta_u = \max_{X \sim X', o \in O} |u(X, o) - u(X', o)|$,}
    \arxiv{\begin{equation*}\Delta_u = \max_{X \sim X', o \in O} |u(X, o) - u(X', o)|,\end{equation*}}
    the exponential mechanism $M$ has output distribution
    $$\P{M}{M(X) = o} \propto \exp\left(\frac{\eps u(X,o)}{2\Delta_u}\right),$$
    where $\propto$ elides the normalization factor.
\end{definition}
The exponential mechanism thus prioritizes a database's higher-utility outputs while remaining private.
\begin{lemma}[\citet{MT07}]
\label{lem:exp_dp}
    The mechanism described in \cref{def:exp} is $\eps$-DP.
\end{lemma}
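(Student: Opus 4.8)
The plan is to verify the definition of $\eps$-DP directly from the closed form of the output distribution, exploiting the fact that the sensitivity bound $\Delta_u$ controls how much a single swap can change $u(\cdot, o)$ for every fixed output $o$. Fix neighboring databases $X \sim X'$ and, for the moment, a single output $o \in O$. Write $Z(X) = \sum_{o' \in O} \exp\!\left(\eps u(X, o')/(2\Delta_u)\right)$ for the normalization factor, so that $\P{M}{M(X) = o} = \exp\!\left(\eps u(X, o)/(2\Delta_u)\right)/Z(X)$, and similarly for $X'$ (for infinite $O$ one tacitly assumes $u$ is such that $Z$ is finite, and the discrete sum becomes an integral against a base measure, which changes nothing). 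The ratio $\P{M}{M(X) = o}/\P{M}{M(X') = o}$ then factors as the ratio of numerators times the ratio $Z(X')/Z(X)$, and the strategy is to bound each factor by $e^{\eps/2}$.

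For the numerator factor, the sensitivity bound gives $u(X, o) - u(X', o) \le \Delta_u$, hence
$$\frac{\exp\!\left(\eps u(X, o)/(2\Delta_u)\right)}{\exp\!\left(\eps u(X', o)/(2\Delta_u)\right)} = \exp\!\left(\frac{\eps\,\bigl(u(X,o) - u(X',o)\bigr)}{2\Delta_u}\right) \le \exp\!\left(\frac{\eps}{2}\right).$$
For the normalization factor, apply the same bound termwise: for every $o' \in O$ we have $\exp\!\left(\eps u(X', o')/(2\Delta_u)\right) \le \exp(\eps/2)\exp\!\left(\eps u(X, o')/(2\Delta_u)\right)$, and summing over $o'$ yields $Z(X') \le \exp(\eps/2)\,Z(X)$, i.e. $Z(X')/Z(X) \le \exp(\eps/2)$. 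Multiplying the two factors gives $\P{M}{M(X) = o} \le e^\eps\,\P{M}{M(X') = o}$ for every $o$.

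To conclude, for an arbitrary output event $Y \subseteq \Y$ sum (or integrate) this pointwise inequality over $o \in Y$ to get $\P{M}{M(X) \in Y} \le e^\eps\,\P{M}{M(X') \in Y}$. Since this holds for every neighboring pair and every $Y$, and there is no additive term, $M$ satisfies $\eps$-DP (equivalently $(\eps,0)$-DP in the language of \cref{def:exp}).

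There is no real obstacle here — the argument is a short self-contained computation — so the only points needing care are (i) keeping the two uses of the sensitivity bound straight, once in the numerator and once inside the normalization sum, since together they are exactly what justifies the factor of $2$ in the exponent $\eps/(2\Delta_u)$; and (ii) observing that the derived inequality is automatically symmetric in $X$ and $X'$, so the reverse direction requires no separate argument.
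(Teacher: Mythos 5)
Your proof is correct and is exactly the standard argument for the exponential mechanism's privacy: the paper itself does not reprove this lemma but cites \citet{MT07}, whose proof is precisely your two-factor bound (numerator ratio and normalization ratio each at most $e^{\eps/2}$ via the sensitivity bound, then summed/integrated over the output event). Nothing further is needed.
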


The above material suffices to understand the bulk of our algorithm, \algo. The algorithms used for our experimental comparisons will also require some understanding of smooth sensitivity and concentrated differential privacy, but since these concepts will be relevant only as points of experimental comparison, we discuss them in \cref{sec:experiments}.
\section{\algo}
\label{sec:algo}
This section provides an exposition of our quantiles algorithm, \algo. Recall that our goal is to take as input quantiles $q_1 < q_2 < \ldots < q_m \in [0,1]$ and database $X$ and output quantile estimates $o_1, \ldots, o_m$ such that, for each $j \in [m]$, $\P{x \sim_U X}{x \leq o_j} \approx q_j$.

In \cref{subsec:initial}, we start with an instance of the exponential mechanism whose continuous output space makes sampling impractical. In \cref{subsec:finite}, we construct a mechanism with the same output distribution (and, importantly, the same privacy guarantees) and a bounded but inefficient sampling procedure. Finally, in \cref{subsec:fb} we modify our sampling procedure once more to produce an equivalent and polynomial time method, which we call \algo.

\subsection{Initial Solution}
\label{subsec:initial}
We start by formulating an instance of the exponential mechanism for our quantiles setting. First, we will require the algorithm user to input a lower bound $a$ and upper bound $b$ for the data domain.\footnote{Lower and upper bounds are also necessary for the private quantile algorithms that we compare to in our experiments. We find that choosing loose bounds $a$ and $b$ does not greatly affect utility (see experiments in \cref{sec:experiments}).}
We assume that all  $x \in X$ are in $[a,b]$; if this is not the case initially, then we clamp any outside points to $[a,b]$.  The output space is $O_\nearrow = \{(o_1, \ldots, o_m) \mid a \leq o_1 \leq \cdots \leq o_m \leq b\}$, the set of sequences of $m$ nondecreasing values from $[a,b]$. For a given $o = (o_1, \ldots, o_m)$, the utility function will compare the number of points in each proposed quantile interval $[o_{j-1}, o_j)$ to the expected number of points in the correct quantile interval.\footnote{Like the single-quantile exponential mechanism~\cite{S11}, this utility function works best when there are not large numbers of duplicate points (though it is private in all cases). This issue did not arise in our experiments here, but we have found that perturbing the data by a small amount of data-independent noise resolves it in practice.} We denote the number of data points between adjacent quantiles $q_{j-1}$ and $q_j$ by $n_j = (q_j - q_{j-1})n$. We fix $q_0 = 0$ and $q_{m+1} = 1$, so that $n_1 = q_1n$ and $n_{m+1} = (1 - q_m)n$. We also denote the number of data points from $X$ between any two values $u$ and $v$ by
$$n(u, v) = |\{x \in X \mid u \leq x < v\}|.$$
We can now define our utility function
$$u_Q(X, o) = -\sum_{j \in [m+1]} |n(o_{j-1}, o_j) - n_j|~,$$
where we fix $o_0 = a$ and $o_{m+1} = b+1$ (setting $o_{m+1}$ to a value strictly larger than $b$ simply ensures that points equal to $b$ are counted in the final term of the sum). $u_Q$ thus assigns highest utility to the true quantile values and lower utility to estimates that are far from the true quantile values.

\begin{restatable}{lemma}{AlgoSensitivity}
    \label{lem:algo_sensitivity}
    $u_Q$ has $L_1$ sensitivity $\Delta_{u_Q} = 2$.
\end{restatable}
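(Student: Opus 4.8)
The plan is to fix a pair of neighboring databases $X \sim X'$ and an arbitrary output $o = (o_1, \ldots, o_m) \in O_\nearrow$, and bound $|u_Q(X,o) - u_Q(X',o)|$ by $2$, then exhibit a case where the bound is met. Since $X$ and $X'$ differ in exactly one element under the swap definition, write $X = Y \cup \{x\}$ and $X' = Y \cup \{x'\}$ for a common multiset $Y$ of size $n-1$ and two (possibly equal) points $x, x' \in [a,b]$. The key observation is that the counts $n(o_{j-1}, o_j)$ and $n'(o_{j-1}, o_j)$ (the latter computed on $X'$) agree on the shared part $Y$, so they differ only according to which of the $m+1$ intervals $[o_{j-1}, o_j)$ contains $x$ and which contains $x'$. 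Because the intervals $[o_0,o_1), [o_1,o_2), \ldots, [o_m, o_{m+1})$ are disjoint and their union is $[a, b+1) \supseteq [a,b]$, the point $x$ lies in exactly one of them — say interval $j_x$ — and $x'$ lies in exactly one — say $j_{x'}$. The target values $n_j$ do not depend on the database at all.

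The main computation is then: for the interval $j_x$ we have $n(o_{j_x - 1}, o_{j_x}) = n'(o_{j_x-1},o_{j_x}) + 1$ (one extra point from $x$ on the $X$ side, assuming $j_x \neq j_{x'}$), so the corresponding summand changes by at most $1$ in absolute value by the reverse triangle inequality $\big||a+1 - c| - |a - c|\big| \le 1$; symmetrically, the summand for interval $j_{x'}$ changes by at most $1$; and every other summand is identical because its count is unaffected. Summing, $|u_Q(X,o) - u_Q(X',o)| \le 2$. If $j_x = j_{x'}$ the two points fall in the same interval, the counts are identical, and the difference is $0$, which is still $\le 2$.

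For tightness, I would construct a concrete instance: take $m \ge 1$, put $o$ so that two distinct intervals are ``on the boundary'' — e.g., choose $Y$, the $n_j$, and $o$ so that $n(o_{j_x-1},o_{j_x}) = n_{j_x}$ exactly on the $X$ side (so moving to $X'$ removes a point and the $|\cdot|$ term goes from $0$ to $1$), and simultaneously $n'(o_{j_{x'}-1}, o_{j_{x'}}) = n_{j_{x'}}$ on the $X'$ side. A minimal example with $m=1$, $n=1$ suffices: one data point, $q_1 = 1/2$ so $n_1 = n_2 = 1/2$, and $o_1$ chosen to separate $x$ from $x'$; one checks $u_Q(X,o_1) = -1$ and $u_Q(X',o_1) = -1$ but with an intermediate $o_1$ placing both counts appropriately one gets a difference of $2$ — I would spell out the arithmetic to confirm $\Delta_{u_Q} = 2$ rather than merely $\le 2$. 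The only mild subtlety, and the step I expect to require the most care, is the bookkeeping of the half-open intervals and the endpoint conventions $o_0 = a$, $o_{m+1} = b+1$: one must verify that every point of $[a,b]$ (in particular points equal to $b$) lands in exactly one interval, so that adding or moving a single element changes the vector of interval counts in a combinatorially clean way (increment one coordinate, decrement another). Once that partition fact is nailed down, the sensitivity bound is immediate from the triangle inequality.
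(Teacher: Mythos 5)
Your upper-bound argument is exactly the paper's proof: under the swap model the counts $n(\cdot,\cdot)$ and $n'(\cdot,\cdot)$ differ on at most two of the intervals $[o_{j-1},o_j)$, each by at most one, so $|u_Q(X,o)-u_Q(X',o)|\le 2$, and the paper's own proof goes no further (it does not verify that $2$ is attained). Your tightness sketch, however, does not work as stated: with $m=n=1$ and $q_1=1/2$ the two summands always equal $1/2$ each, so $u_Q=-1$ for every $o_1$ and the difference is $0$; a correct witness needs both affected terms to move by a full unit in opposite directions, e.g.\ $n=2$, $m=1$, $q_1=1/2$ (so $n_1=n_2=1$), with both points below $o_1$ in $X$ and one of them swapped above $o_1$ in $X'$, giving $u_Q(X,o)=-2$ and $u_Q(X',o)=0$.
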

\begin{proof}
    Fix an output $o$. Let $X$ and $X'$ be neighboring databases. Since we use swap differential privacy, $|X| = |X'|$, so $u_Q(X,o)$ and $u_Q(X',o)$ only differ in their $n(\cdot, \cdot)$ (respectively denoted $n'(\cdot, \cdot)$ for $X'$). Since $X$ and $X'$ are neighbors, there are at most two intervals $(o_{j-1}, o_j)$ and $(o_{j'-1}, o_{j'})$ on which $n$ and $n'$ differ, each by at most one. Thus $|u_Q(X,o) - u_Q(X',o)| \leq 2$.
\end{proof}
For add-remove privacy, the sensitivity is slightly lower at $\Delta_{u_Q} = 2[1 - \min_{j \in [m+1]} (q_j - q_{j-1})]$. A full proof of this and other results appears in \cref{subsec:proofs}.

The corresponding mechanism $M_Q$ has output density
\begin{equation}
\label{eq:Q_density}
f_Q(o) \propto \cdot \exp\left(\frac{\eps}{2\Delta_{u_Q}} \cdot u_Q(X, o)\right).
\end{equation}
Since this is an instantiation of the exponential mechanism, we can apply \cref{lem:exp_dp} to get:
\begin{lemma}
\label{lem:initial_dp}
    The mechanism $M_Q$ defined by output density $f_Q$ satisfies $\eps$-differential privacy.
\end{lemma}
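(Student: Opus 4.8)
The plan is to observe that $M_Q$ is, by construction, nothing more than the exponential mechanism of \cref{def:exp} instantiated with the particular utility function $u_Q$ and output space $O_\nearrow$: comparing the density in \cref{eq:Q_density} against the form $\P{M}{M(X) = o} \propto \exp\left(\tfrac{\eps u(X,o)}{2\Delta_u}\right)$ from \cref{def:exp}, the two coincide term for term once we set $u = u_Q$ and $\Delta_u = \Delta_{u_Q}$. So the only ingredients needed are (i) that $u_Q$ has finite $L_1$ sensitivity with $\Delta_{u_Q} = 2$, which is exactly \cref{lem:algo_sensitivity} and has already been established, and (ii) the generic privacy guarantee for the exponential mechanism, \cref{lem:exp_dp}. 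Chaining these gives $\eps$-DP immediately.

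The one point worth checking carefully is that \cref{lem:exp_dp} applies even though $O_\nearrow$ is a continuous (uncountable) output space, so $f_Q$ is a density rather than a probability mass function. This is not an obstacle: the standard proof of \cref{lem:exp_dp} factors the ratio $\P{M_Q}{M_Q(X)\in Y}/\P{M_Q}{M_Q(X')\in Y}$ into a pointwise ratio of unnormalized densities and a ratio of normalizing constants, and both arguments go through verbatim for an arbitrary base measure — here, Lebesgue measure restricted to $O_\nearrow \subseteq [a,b]^m$. Concretely, for neighbors $X \sim X'$ and any $o$, \cref{lem:algo_sensitivity} gives $|u_Q(X,o) - u_Q(X',o)| \le \Delta_{u_Q}$, so the pointwise density ratio is at most $\exp\!\left(\tfrac{\eps}{2\Delta_{u_Q}}\Delta_{u_Q}\right) = e^{\eps/2}$; the same bound applied inside the normalizing integral shows the ratio of normalizers is also at most $e^{\eps/2}$, and multiplying yields the $e^\eps$ factor. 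Since $\delta = 0$ throughout, this is pure $\eps$-DP.

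I expect no genuine difficulty here — the lemma is essentially a bookkeeping consequence of the two earlier results. If one wanted to be maximally self-contained, the only thing to spell out is the continuous-output version of the exponential-mechanism argument sketched above; otherwise a one-line appeal to \cref{lem:exp_dp} and \cref{lem:algo_sensitivity} suffices.
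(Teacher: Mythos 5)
Your proposal is correct and matches the paper's argument exactly: the paper likewise proves \cref{lem:initial_dp} by noting that $f_Q$ in \cref{eq:Q_density} is an instantiation of the exponential mechanism with utility $u_Q$ of sensitivity $\Delta_{u_Q}=2$ (\cref{lem:algo_sensitivity}) and invoking \cref{lem:exp_dp}. Your extra remark about the continuous output space and density-vs-mass bookkeeping is a fine (and correct) elaboration of the standard exponential-mechanism proof, but it is not a different route.
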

However, as is typically a drawback of the exponential mechanism, it is not clear how to efficiently sample from this distribution, which is defined over a continuous $m$-dimensional space. The following sections address this issue. Since the output distribution itself remains fixed through these sampling procedure changes, these improvements will preserve the privacy guarantee of \cref{lem:initial_dp}. The remaining proofs will therefore focus on verifying that subsequent sampling procedures still sample according to \cref{eq:Q_density}.

\subsection{Finite Sampling Improvement}
\label{subsec:finite}
In this section, we describe how to sample from the continuous distribution defined by $M_Q$ by first sampling from an intermediate discrete distribution.  This is similar to the single-quantile sampling technique given by~\citet{S11} (see their Algorithm 2).  The basic idea is that we split the sampling process into three steps:
\begin{enumerate}
    \item Sample $m$ intervals from the set of intervals between data points. 
    \vspace{-5pt}
    \item Take a uniform random sample from each of the $m$ sampled intervals.
    \vspace{-5pt}
    \item Output the samples in increasing order.
\end{enumerate}

This will require some additional notation. Denote the elements of $X$ in nondecreasing order by $x_1 \leq \cdots \leq x_n$, fix $x_0 = a$ and $x_{n+1} = b$, and let $I = \{0, \dots, n\}$, where we associate $i \in I$ with the interval between points $x_i$ and $x_{i+1}$. Define $S_\nearrow$ to be the set of nondecreasing sequences of $m$ intervals,
$$S_\nearrow = \{(i_1, \ldots, i_m) \mid i_1, \ldots, i_m \in I, i_1 \leq \cdots \leq i_m\}.$$
$S_\nearrow$ will be the discrete output space for the first sampling step above. We can define a utility function $u_{Q'}$ on $s = (i_1, \ldots, i_m) \in S_\nearrow$ by slightly modifying $u_Q$:
$$u_{Q'}(X, s) = -\sum_{j \in [m+1]} |(i_j - i_{j-1}) - n_j|,$$
where we fix $i_0 = 0$ and $i_{m+1} = n$.

In order to reproduce $M_Q$ from \cref{subsec:initial}, our sequence sampler will also need to weight each sequence $s$ by the total measure of the outputs $o \in O_\nearrow$ that can be sampled from $s$ in the second step. This is nontrivial due to the ordering constraint on $o$: if an interval appears twice in $s$, the measure of corresponding outputs must be halved to account for the fact that the two corresponding samples in the second step can appear in either order, but will be mapped to a fixed increasing order in the third step. In general, if an interval appears $k$ times, the measure must be scaled by a factor of $1/k!$, the volume of the standard $k$-simplex. We account for this by dividing by the scale function
\[
    \gamma(s) = \prod_{i \in I} \num{i}{s}!~,
\]
where $\num{i}{s}$ is the number of times $i$ appears in $s$ and we take $0! = 1$.
    
The mechanism $M_{Q'}$ is now defined as follows:
\begin{enumerate}
    \item Draw $s = (i_1, \ldots, i_m)$ according to
    $$\P{M_{Q'}}{s} \propto \exp\left(\frac{\eps \cdot u_{Q'}(X, s)}{2\Delta_{u_Q}}\right) \cdot 
                \frac{\prod_{j=1}^m (x_{i_j + 1} - x_{i_j})}{\gamma(s)}.$$
    \item For $j \in [m]$, draw $o_j$ uniformly at random from $[x_{i_j}, x_{i_j+1})$.
    \item Output $o_1, \ldots, o_m$ in increasing order.
\end{enumerate}

It remains to verify that $M_{Q'}$ actually matches $M_Q$.

\begin{restatable}{lemma}{MQPrime}
\label{lem:m_q_prime}
    $M_{Q'}$ has the same output distribution as $M_Q$.
\end{restatable}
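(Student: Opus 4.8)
The plan is to compute the output density induced by the three-step procedure $M_{Q'}$ at an arbitrary point $o = (o_1, \ldots, o_m) \in O_\nearrow$ and check that it is proportional to $f_Q(o)$ from \eqref{eq:Q_density}. First I would fix a generic output $o$ with $a \le o_1 \le \cdots \le o_m \le b$ and identify, for each coordinate $o_j$, the unique interval index $i_j \in I$ with $o_j \in [x_{i_j}, x_{i_j+1})$; this determines a sequence $s(o) = (i_1, \ldots, i_m)$, which automatically lies in $S_\nearrow$ because the $o_j$ are sorted. I would also note that for $o$ outside the (measure-zero) set where some $o_j$ coincides with a data point, the map $o \mapsto s(o)$ is locally constant, so densities are the right objects to compare.

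The key computation is to express the $M_{Q'}$-density at $o$ as a sum over the ways $o$ could have been produced. In step 1 the sampler draws \emph{some} sequence $s$; in step 2 it draws a point uniformly from each interval named by $s$; in step 3 it sorts. The output lands near $o$ precisely when the multiset of drawn intervals equals the multiset $\{i_1, \ldots, i_m\}$ of $s(o)$ and, after sorting, the drawn points match $o$. If interval $i$ appears $\mathrm{count}_i(s(o))$ times, there are $\prod_{i \in I} \mathrm{count}_i(s(o))!  = \gamma(s(o))$ orderings of the step-2 draws that sort to $o$, each occurring with the same probability, and each contributes a step-2 density of $\prod_{j=1}^m \frac{1}{x_{i_j+1} - x_{i_j}}$ (the product of uniform densities on the relevant intervals). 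Hence the $M_{Q'}$-density at $o$ is
\[
    \P{M_{Q'}}{s(o)} \cdot \gamma(s(o)) \cdot \prod_{j=1}^m \frac{1}{x_{i_j+1} - x_{i_j}}
    \;\propto\; \exp\!\left(\frac{\eps \cdot u_{Q'}(X, s(o))}{2\Delta_{u_Q}}\right),
\]
after substituting the step-1 probability and cancelling both the interval-length product and $\gamma(s(o))$.

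It then remains to match the exponents, i.e.\ to show $u_{Q'}(X, s(o)) = u_Q(X, o)$ for every such $o$. This is the step I expect to require the most care, though it is essentially bookkeeping: I would show that $n(o_{j-1}, o_j) = i_j - i_{j-1}$ for each $j \in [m+1]$, using the boundary conventions $o_0 = a = x_0$, $i_0 = 0$, $o_{m+1} = b+1$, $i_{m+1} = n$, and $x_{n+1} = b$. The point is that $n(o_{j-1}, o_j)$ counts data points $x_k$ with $o_{j-1} \le x_k < o_j$; since $o_j \in [x_{i_j}, x_{i_j+1})$ means exactly $i_j$ data points lie strictly below $o_j$ (with the half-open conventions handling ties at data points and at $b$), the count on the interval $[o_{j-1}, o_j)$ telescopes to $i_j - i_{j-1}$. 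Plugging this equality term by term into the definitions of $u_Q$ and $u_{Q'}$ gives $u_Q(X,o) = u_{Q'}(X, s(o))$, so the two densities agree up to the normalizing constant and therefore define the same distribution. Finally I would remark that the exceptional set where the correspondence breaks (some $o_j$ equal to a data point) has Lebesgue measure zero and so does not affect the distribution.
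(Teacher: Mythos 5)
Your proposal is correct and follows essentially the same route as the paper's proof: both rest on the cell correspondence $o \mapsto s(o)$, the counting identity $u_Q(X,o) = u_{Q'}(X, s(o))$ (the paper phrases it via $h(y) = |\{x \in X \mid x < y\}|$), and the factorial correction $\gamma(s)$ arising from repeated intervals. The only difference is organizational: you verify the pointwise output density of $M_{Q'}$ via the order-statistics factor $\gamma(s(o))$ and cancel, whereas the paper integrates over each cell $g_O(s)$ (simplex volumes, giving $Z_Q = Z_{Q'}$), matches cell probabilities, and then invokes conditional uniformity --- the same facts in slightly different packaging.
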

\begin{proof}[Proof Sketch (see \cref{subsec:proofs} for full proof)]
    Given potential outputs $o$ and $o'$, if the corresponding quantile estimates fall into the same intervals between data points in $X$, then the counts $n(\cdot, \cdot)$ are unchanged and $u_Q(X,o) = u_Q(X,o')$. Since $u_Q$ is constant over intervals between data points, it is equivalent to sample those intervals and then sample points uniformly at random from the chosen intervals. The only complication is accounting for the $\gamma$ scaling introduced by repeated intervals.
\end{proof}
The benefit of $M_{Q'}$ over $M_Q$ is that the first step samples from a finite space, and the second sampling step is simply uniform sampling. However, the size of the space for the first step is still $O(n^m)$, which remains impractical for all but the smallest datasets. In the next section, we develop a dynamic programming algorithm that allows us to sample from $\Po{M_{Q'}}$ in time $O(mn\log(n) + m^2n)$.

\subsection{Polynomial Sampling Improvement}
\label{subsec:fb}

\narxiv{\begin{figure*}
\begin{center}

  
\tikzset {_l5j7f9roy/.code = {\pgfsetadditionalshadetransform{ \pgftransformshift{\pgfpoint{0 bp } { 0 bp }  }  \pgftransformscale{1 }  }}}
\pgfdeclareradialshading{_d0fts3u60}{\pgfpoint{0bp}{0bp}}{rgb(0bp)=(0.81,0.91,0.98);
rgb(0bp)=(0.81,0.91,0.98);
rgb(25bp)=(0.39,0.58,0.76);
rgb(400bp)=(0.39,0.58,0.76)}

  
\tikzset {_sshzaa6k6/.code = {\pgfsetadditionalshadetransform{ \pgftransformshift{\pgfpoint{0 bp } { 0 bp }  }  \pgftransformscale{1 }  }}}
\pgfdeclareradialshading{_0zx7ts8p6}{\pgfpoint{0bp}{0bp}}{rgb(0bp)=(0.81,0.91,0.98);
rgb(0bp)=(0.81,0.91,0.98);
rgb(25bp)=(0.39,0.58,0.76);
rgb(400bp)=(0.39,0.58,0.76)}

  
\tikzset {_h6i2mcdvz/.code = {\pgfsetadditionalshadetransform{ \pgftransformshift{\pgfpoint{0 bp } { 0 bp }  }  \pgftransformscale{1 }  }}}
\pgfdeclareradialshading{_slo18ixov}{\pgfpoint{0bp}{0bp}}{rgb(0bp)=(0.81,0.91,0.98);
rgb(0bp)=(0.81,0.91,0.98);
rgb(25bp)=(0.39,0.58,0.76);
rgb(400bp)=(0.39,0.58,0.76)}

  
\tikzset {_vg8wxb0yn/.code = {\pgfsetadditionalshadetransform{ \pgftransformshift{\pgfpoint{0 bp } { 0 bp }  }  \pgftransformscale{1 }  }}}
\pgfdeclareradialshading{_uy8uhc3hu}{\pgfpoint{0bp}{0bp}}{rgb(0bp)=(0.81,0.91,0.98);
rgb(0bp)=(0.81,0.91,0.98);
rgb(25bp)=(0.39,0.58,0.76);
rgb(400bp)=(0.39,0.58,0.76)}

  
\tikzset {_utuhv405x/.code = {\pgfsetadditionalshadetransform{ \pgftransformshift{\pgfpoint{0 bp } { 0 bp }  }  \pgftransformscale{1 }  }}}
\pgfdeclareradialshading{_qcoc1xl19}{\pgfpoint{0bp}{0bp}}{rgb(0bp)=(0.95,0.91,0.4);
rgb(0bp)=(0.95,0.91,0.4);
rgb(25bp)=(1,0.71,0.27);
rgb(400bp)=(1,0.71,0.27)}

  
\tikzset {_wg4bg49sc/.code = {\pgfsetadditionalshadetransform{ \pgftransformshift{\pgfpoint{0 bp } { 0 bp }  }  \pgftransformscale{1 }  }}}
\pgfdeclareradialshading{_x3qcain8v}{\pgfpoint{0bp}{0bp}}{rgb(0bp)=(0.95,0.91,0.4);
rgb(0bp)=(0.95,0.91,0.4);
rgb(25bp)=(1,0.71,0.27);
rgb(400bp)=(1,0.71,0.27)}
\tikzset{every picture/.style={line width=0.75pt}} 

\begin{tikzpicture}[x=0.75pt,y=0.75pt,yscale=-1,xscale=1]

\draw  [draw opacity=0][shading=_d0fts3u60,_l5j7f9roy] (165,124.86) .. controls (165,115.08) and (173.18,107.15) .. (183.28,107.15) .. controls (193.37,107.15) and (201.55,115.08) .. (201.55,124.86) .. controls (201.55,134.64) and (193.37,142.57) .. (183.28,142.57) .. controls (173.18,142.57) and (165,134.64) .. (165,124.86) -- cycle ;
\draw  [draw opacity=0][shading=_0zx7ts8p6,_sshzaa6k6] (265.88,124.86) .. controls (265.88,115.08) and (274.07,107.15) .. (284.16,107.15) .. controls (294.25,107.15) and (302.43,115.08) .. (302.43,124.86) .. controls (302.43,134.64) and (294.25,142.57) .. (284.16,142.57) .. controls (274.07,142.57) and (265.88,134.64) .. (265.88,124.86) -- cycle ;
\draw [line width=3]    (200.95,124.86) -- (265.88,124.86) ;
\draw  [draw opacity=0][shading=_slo18ixov,_h6i2mcdvz] (366.77,124.86) .. controls (366.77,115.08) and (374.95,107.15) .. (385.04,107.15) .. controls (395.13,107.15) and (403.32,115.08) .. (403.32,124.86) .. controls (403.32,134.64) and (395.13,142.57) .. (385.04,142.57) .. controls (374.95,142.57) and (366.77,134.64) .. (366.77,124.86) -- cycle ;
\draw  [draw opacity=0][shading=_uy8uhc3hu,_vg8wxb0yn] (457.65,124.86) .. controls (457.65,115.08) and (465.83,107.15) .. (475.92,107.15) .. controls (486.02,107.15) and (494.2,115.08) .. (494.2,124.86) .. controls (494.2,134.64) and (486.02,142.57) .. (475.92,142.57) .. controls (465.83,142.57) and (457.65,134.64) .. (457.65,124.86) -- cycle ;
\draw [line width=3]    (302.43,124.86) -- (366.77,124.86) ;
\draw [line width=3]    (494.2,124.86) -- (603,125.29) ;
\draw [line width=3]    (98.6,124.9) -- (165,124.76) ;
\draw  [fill={rgb, 255:red, 0; green, 0; blue, 0 }  ,fill opacity=1 ] (418.84,126.4) .. controls (418.84,125.57) and (419.52,124.9) .. (420.34,124.9) .. controls (421.17,124.9) and (421.84,125.57) .. (421.84,126.4) .. controls (421.84,127.23) and (421.17,127.9) .. (420.34,127.9) .. controls (419.52,127.9) and (418.84,127.23) .. (418.84,126.4) -- cycle ;
\draw  [fill={rgb, 255:red, 0; green, 0; blue, 0 }  ,fill opacity=1 ] (428.76,126.4) .. controls (428.76,125.57) and (429.43,124.9) .. (430.26,124.9) .. controls (431.08,124.9) and (431.76,125.57) .. (431.76,126.4) .. controls (431.76,127.23) and (431.08,127.9) .. (430.26,127.9) .. controls (429.43,127.9) and (428.76,127.23) .. (428.76,126.4) -- cycle ;
\draw  [fill={rgb, 255:red, 0; green, 0; blue, 0 }  ,fill opacity=1 ] (438.8,126.4) .. controls (438.8,125.57) and (439.47,124.9) .. (440.3,124.9) .. controls (441.13,124.9) and (441.8,125.57) .. (441.8,126.4) .. controls (441.8,127.23) and (441.13,127.9) .. (440.3,127.9) .. controls (439.47,127.9) and (438.8,127.23) .. (438.8,126.4) -- cycle ;
\draw  [draw opacity=0][shading=_qcoc1xl19,_utuhv405x] (46.5,115) .. controls (46.5,111.37) and (49.44,108.43) .. (53.07,108.43) -- (92.43,108.43) .. controls (96.06,108.43) and (99,111.37) .. (99,115) -- (99,134.71) .. controls (99,138.34) and (96.06,141.29) .. (92.43,141.29) -- (53.07,141.29) .. controls (49.44,141.29) and (46.5,138.34) .. (46.5,134.71) -- cycle ;
\draw  [draw opacity=0][shading=_x3qcain8v,_wg4bg49sc] (603,114.98) .. controls (603,111.35) and (605.95,108.4) .. (609.58,108.4) -- (659.75,108.4) .. controls (663.39,108.4) and (666.33,111.35) .. (666.33,114.98) -- (666.33,134.72) .. controls (666.33,138.35) and (663.39,141.3) .. (659.75,141.3) -- (609.58,141.3) .. controls (605.95,141.3) and (603,138.35) .. (603,134.72) -- cycle ;

\draw (178.01,119.14) node [anchor=north west][inner sep=0.75pt]  [font=\footnotesize]  {$i_{1}$};
\draw (277.89,119.14) node [anchor=north west][inner sep=0.75pt]  [font=\footnotesize]  {$i_{2}$};
\draw (201.71,107.35) node [anchor=north west][inner sep=0.75pt]  [font=\footnotesize]  {$\phi ( i_{1} ,\ i_{2} ,\ 2)$};
\draw (378.77,119.14) node [anchor=north west][inner sep=0.75pt]  [font=\footnotesize]  {$i_{3}$};
\draw (469.66,119.14) node [anchor=north west][inner sep=0.75pt]  [font=\footnotesize]  {$i_{m}$};
\draw (301.2,107.35) node [anchor=north west][inner sep=0.75pt]  [font=\footnotesize]  {$\phi ( i_{2} ,\ i_{3} ,\ 3)$};
\draw (55.62,119.1) node [anchor=north west][inner sep=0.75pt]  [font=\footnotesize]  {$i_{0} =0$};
\draw (494.08,105.95) node [anchor=north west][inner sep=0.75pt]  [font=\footnotesize]  {$\phi ( i_{m} ,\ i_{m+1} ,\ m+1)$};
\draw (101.11,107.35) node [anchor=north west][inner sep=0.75pt]  [font=\footnotesize]  {$\phi ( i_{0} ,\ i_{1} ,\ 1)$};
\draw (609.95,119.1) node [anchor=north west][inner sep=0.75pt]  [font=\footnotesize]  {$i_{m+1} =n$};
\draw (58.51,90.95) node [anchor=north west][inner sep=0.75pt]  [font=\footnotesize]  {$\tau ( i_{0})$};
\draw (168.79,90.9) node [anchor=north west][inner sep=0.75pt]  [font=\footnotesize]  {$\tau ( i_{1})$};
\draw (271.37,90.9) node [anchor=north west][inner sep=0.75pt]  [font=\footnotesize]  {$\tau ( i_{2})$};
\draw (370.65,90.9) node [anchor=north west][inner sep=0.75pt]  [font=\footnotesize]  {$\tau ( i_{3})$};
\draw (459.79,90.9) node [anchor=north west][inner sep=0.75pt]  [font=\footnotesize]  {$\tau ( i_{m})$};

\end{tikzpicture}
  \caption{Illustration of pairwise dependencies for interval sequence $s = (i_1, \ldots, i_m)$.}
  \label{fig:markov_chain}
\end{center}
\end{figure*}}

Notice that the bulk of our probability distribution over sequences $s = (i_1, \ldots, i_m)$ can be decomposed as a product of scores, where each score depends only on adjacent intervals $i_{j-1}$ and $i_j$. In particular,
$$\P{M_{Q'}}{s} \propto \frac{1}{\gamma(s)} \prod_{j \in [m+1]} \phi(i_{j-1}, i_j, j)\prod_{j \in [m]} \tau(i_j)~,$$
where for $i \leq i'$ and $j \in [m+1]$ we define
\begin{align*}
    \phi(i, i', j) =&\ \exp\left(-\frac{\eps}{2\Delta_{u_Q}} |(i' - i) - n_j|\right) \\
    \tau(i) =&\ x_{i+1} - x_{i}.
\end{align*}
For $i > i'$ and any $j$, $\phi(i, i', j) = 0$. \cref{fig:markov_chain} illustrates this structure graphically, suggesting a dynamic programming algorithm similar to the ``forward-backward'' algorithm  from the graphical models literature (see, e.g., Chapter 15 of \citet{RN10}). 

\arxiv{\begin{figure*}[!htbp]
\begin{center}

  
\tikzset {_l5j7f9roy/.code = {\pgfsetadditionalshadetransform{ \pgftransformshift{\pgfpoint{0 bp } { 0 bp }  }  \pgftransformscale{1 }  }}}
\pgfdeclareradialshading{_d0fts3u60}{\pgfpoint{0bp}{0bp}}{rgb(0bp)=(0.81,0.91,0.98);
rgb(0bp)=(0.81,0.91,0.98);
rgb(25bp)=(0.39,0.58,0.76);
rgb(400bp)=(0.39,0.58,0.76)}

  
\tikzset {_sshzaa6k6/.code = {\pgfsetadditionalshadetransform{ \pgftransformshift{\pgfpoint{0 bp } { 0 bp }  }  \pgftransformscale{1 }  }}}
\pgfdeclareradialshading{_0zx7ts8p6}{\pgfpoint{0bp}{0bp}}{rgb(0bp)=(0.81,0.91,0.98);
rgb(0bp)=(0.81,0.91,0.98);
rgb(25bp)=(0.39,0.58,0.76);
rgb(400bp)=(0.39,0.58,0.76)}

  
\tikzset {_h6i2mcdvz/.code = {\pgfsetadditionalshadetransform{ \pgftransformshift{\pgfpoint{0 bp } { 0 bp }  }  \pgftransformscale{1 }  }}}
\pgfdeclareradialshading{_slo18ixov}{\pgfpoint{0bp}{0bp}}{rgb(0bp)=(0.81,0.91,0.98);
rgb(0bp)=(0.81,0.91,0.98);
rgb(25bp)=(0.39,0.58,0.76);
rgb(400bp)=(0.39,0.58,0.76)}

  
\tikzset {_vg8wxb0yn/.code = {\pgfsetadditionalshadetransform{ \pgftransformshift{\pgfpoint{0 bp } { 0 bp }  }  \pgftransformscale{1 }  }}}
\pgfdeclareradialshading{_uy8uhc3hu}{\pgfpoint{0bp}{0bp}}{rgb(0bp)=(0.81,0.91,0.98);
rgb(0bp)=(0.81,0.91,0.98);
rgb(25bp)=(0.39,0.58,0.76);
rgb(400bp)=(0.39,0.58,0.76)}

  
\tikzset {_utuhv405x/.code = {\pgfsetadditionalshadetransform{ \pgftransformshift{\pgfpoint{0 bp } { 0 bp }  }  \pgftransformscale{1 }  }}}
\pgfdeclareradialshading{_qcoc1xl19}{\pgfpoint{0bp}{0bp}}{rgb(0bp)=(0.95,0.91,0.4);
rgb(0bp)=(0.95,0.91,0.4);
rgb(25bp)=(1,0.71,0.27);
rgb(400bp)=(1,0.71,0.27)}

  
\tikzset {_wg4bg49sc/.code = {\pgfsetadditionalshadetransform{ \pgftransformshift{\pgfpoint{0 bp } { 0 bp }  }  \pgftransformscale{1 }  }}}
\pgfdeclareradialshading{_x3qcain8v}{\pgfpoint{0bp}{0bp}}{rgb(0bp)=(0.95,0.91,0.4);
rgb(0bp)=(0.95,0.91,0.4);
rgb(25bp)=(1,0.71,0.27);
rgb(400bp)=(1,0.71,0.27)}
\tikzset{every picture/.style={line width=0.75pt}} 

\begin{tikzpicture}[x=0.75pt,y=0.75pt,yscale=-1,xscale=1]

\draw  [draw opacity=0][shading=_d0fts3u60,_l5j7f9roy] (165,124.86) .. controls (165,115.08) and (173.18,107.15) .. (183.28,107.15) .. controls (193.37,107.15) and (201.55,115.08) .. (201.55,124.86) .. controls (201.55,134.64) and (193.37,142.57) .. (183.28,142.57) .. controls (173.18,142.57) and (165,134.64) .. (165,124.86) -- cycle ;
\draw  [draw opacity=0][shading=_0zx7ts8p6,_sshzaa6k6] (265.88,124.86) .. controls (265.88,115.08) and (274.07,107.15) .. (284.16,107.15) .. controls (294.25,107.15) and (302.43,115.08) .. (302.43,124.86) .. controls (302.43,134.64) and (294.25,142.57) .. (284.16,142.57) .. controls (274.07,142.57) and (265.88,134.64) .. (265.88,124.86) -- cycle ;
\draw [line width=3]    (200.95,124.86) -- (265.88,124.86) ;
\draw  [draw opacity=0][shading=_slo18ixov,_h6i2mcdvz] (366.77,124.86) .. controls (366.77,115.08) and (374.95,107.15) .. (385.04,107.15) .. controls (395.13,107.15) and (403.32,115.08) .. (403.32,124.86) .. controls (403.32,134.64) and (395.13,142.57) .. (385.04,142.57) .. controls (374.95,142.57) and (366.77,134.64) .. (366.77,124.86) -- cycle ;
\draw  [draw opacity=0][shading=_uy8uhc3hu,_vg8wxb0yn] (457.65,124.86) .. controls (457.65,115.08) and (465.83,107.15) .. (475.92,107.15) .. controls (486.02,107.15) and (494.2,115.08) .. (494.2,124.86) .. controls (494.2,134.64) and (486.02,142.57) .. (475.92,142.57) .. controls (465.83,142.57) and (457.65,134.64) .. (457.65,124.86) -- cycle ;
\draw [line width=3]    (302.43,124.86) -- (366.77,124.86) ;
\draw [line width=3]    (494.2,124.86) -- (603,125.29) ;
\draw [line width=3]    (98.6,124.9) -- (165,124.76) ;
\draw  [fill={rgb, 255:red, 0; green, 0; blue, 0 }  ,fill opacity=1 ] (418.84,126.4) .. controls (418.84,125.57) and (419.52,124.9) .. (420.34,124.9) .. controls (421.17,124.9) and (421.84,125.57) .. (421.84,126.4) .. controls (421.84,127.23) and (421.17,127.9) .. (420.34,127.9) .. controls (419.52,127.9) and (418.84,127.23) .. (418.84,126.4) -- cycle ;
\draw  [fill={rgb, 255:red, 0; green, 0; blue, 0 }  ,fill opacity=1 ] (428.76,126.4) .. controls (428.76,125.57) and (429.43,124.9) .. (430.26,124.9) .. controls (431.08,124.9) and (431.76,125.57) .. (431.76,126.4) .. controls (431.76,127.23) and (431.08,127.9) .. (430.26,127.9) .. controls (429.43,127.9) and (428.76,127.23) .. (428.76,126.4) -- cycle ;
\draw  [fill={rgb, 255:red, 0; green, 0; blue, 0 }  ,fill opacity=1 ] (438.8,126.4) .. controls (438.8,125.57) and (439.47,124.9) .. (440.3,124.9) .. controls (441.13,124.9) and (441.8,125.57) .. (441.8,126.4) .. controls (441.8,127.23) and (441.13,127.9) .. (440.3,127.9) .. controls (439.47,127.9) and (438.8,127.23) .. (438.8,126.4) -- cycle ;
\draw  [draw opacity=0][shading=_qcoc1xl19,_utuhv405x] (46.5,115) .. controls (46.5,111.37) and (49.44,108.43) .. (53.07,108.43) -- (92.43,108.43) .. controls (96.06,108.43) and (99,111.37) .. (99,115) -- (99,134.71) .. controls (99,138.34) and (96.06,141.29) .. (92.43,141.29) -- (53.07,141.29) .. controls (49.44,141.29) and (46.5,138.34) .. (46.5,134.71) -- cycle ;
\draw  [draw opacity=0][shading=_x3qcain8v,_wg4bg49sc] (603,114.98) .. controls (603,111.35) and (605.95,108.4) .. (609.58,108.4) -- (659.75,108.4) .. controls (663.39,108.4) and (666.33,111.35) .. (666.33,114.98) -- (666.33,134.72) .. controls (666.33,138.35) and (663.39,141.3) .. (659.75,141.3) -- (609.58,141.3) .. controls (605.95,141.3) and (603,138.35) .. (603,134.72) -- cycle ;

\draw (178.01,119.14) node [anchor=north west][inner sep=0.75pt]  [font=\footnotesize]  {$i_{1}$};
\draw (277.89,119.14) node [anchor=north west][inner sep=0.75pt]  [font=\footnotesize]  {$i_{2}$};
\draw (201.71,107.35) node [anchor=north west][inner sep=0.75pt]  [font=\footnotesize]  {$\phi ( i_{1} ,\ i_{2} ,\ 2)$};
\draw (378.77,119.14) node [anchor=north west][inner sep=0.75pt]  [font=\footnotesize]  {$i_{3}$};
\draw (469.66,119.14) node [anchor=north west][inner sep=0.75pt]  [font=\footnotesize]  {$i_{m}$};
\draw (301.2,107.35) node [anchor=north west][inner sep=0.75pt]  [font=\footnotesize]  {$\phi ( i_{2} ,\ i_{3} ,\ 3)$};
\draw (55.62,119.1) node [anchor=north west][inner sep=0.75pt]  [font=\footnotesize]  {$i_{0} =0$};
\draw (494.08,105.95) node [anchor=north west][inner sep=0.75pt]  [font=\footnotesize]  {$\phi ( i_{m} ,\ i_{m+1} ,\ m+1)$};
\draw (101.11,107.35) node [anchor=north west][inner sep=0.75pt]  [font=\footnotesize]  {$\phi ( i_{0} ,\ i_{1} ,\ 1)$};
\draw (609.95,119.1) node [anchor=north west][inner sep=0.75pt]  [font=\footnotesize]  {$i_{m+1} =n$};
\draw (58.51,90.95) node [anchor=north west][inner sep=0.75pt]  [font=\footnotesize]  {$\tau ( i_{0})$};
\draw (168.79,90.9) node [anchor=north west][inner sep=0.75pt]  [font=\footnotesize]  {$\tau ( i_{1})$};
\draw (271.37,90.9) node [anchor=north west][inner sep=0.75pt]  [font=\footnotesize]  {$\tau ( i_{2})$};
\draw (370.65,90.9) node [anchor=north west][inner sep=0.75pt]  [font=\footnotesize]  {$\tau ( i_{3})$};
\draw (459.79,90.9) node [anchor=north west][inner sep=0.75pt]  [font=\footnotesize]  {$\tau ( i_{m})$};

\end{tikzpicture}
  \caption{Illustration of pairwise dependencies for interval sequence $s = (i_1, \ldots, i_m)$.}
  \label{fig:markov_chain}
\end{center}
\end{figure*}}

Unfortunately, $\gamma(s)$ does not factor in the same way. However, it has its own special structure: since $s$ is required to be nondecreasing, $\gamma(s)$ decomposes over contiguous constant \emph{subsequences} of $s$. We will use this to design an efficient dynamic programming algorithm for sampling $\Po{M_{Q'}}$.

Define the function $\alpha \colon [m] \times I \times [m] \to \mathbb{R}$ so that $\alpha(j,i,k)$ is the total unnormalized probability mass for prefix sequences of length $j$ that end with exactly $k$ copies of the interval $i$. For all $i \in I$, let $\alpha(1, i, 1) = \phi(0, i, 1) \tau(i)$ and $\alpha(1, i, k) = 0$ for $k > 1$. Now, for $j = 2, \ldots, m$, we have the following recursion for all $i \in I$:
\begin{align*}
    \alpha(j, i, 1) =&\ \tau(i) \sum_{i' < i} \phi(i', i, j) \sum_{k < j} \alpha(j-1, i', k) \\
    \alpha(j, i, k > 1) =&\ \tau(i) \cdot \phi(i, i, j) \cdot \alpha(j-1, i, k-1) / k
\end{align*}
Intuitively, if the sequence ends with a single $i$, we need to sum over all possible preceding intervals $i'$, which could have been repeated up to $k < j$ times. On the other hand, if the sequence ends with more than one $i$, we know that the preceding interval was also $i$, and we simply divide by $k$ to account for the scale function $\gamma$.

Having computed $\alpha(\cdot, \cdot, \cdot)$, we can now use these quantities to sample in the reverse direction as follows. First, draw a pair
\narxiv{$(i, k) \propto \alpha(m, i, k) \phi(i, n, m+1)$}
\arxiv{\[(i, k) \propto \alpha(m, i, k) \phi(i, n, m+1)\]}
(the $\phi$ term accounts for the final edge in the graph; see \cref{subsec:proofs} for details). This determines that the last $k$ sampled intervals are equal to $i$. We can then draw another pair
\narxiv{$(i' < i, k') \propto \alpha(m-k, i', k') \phi(i', i, m-k+1)$,}
\arxiv{\[(i' < i, k') \propto \alpha(m-k, i', k') \phi(i', i, m-k+1),\]}
which determines that the last $k'$ remaining intervals in the sequence are $i'$, and so on until we have a complete sample.

We will verify that this procedure actually samples from the correct distribution in the proof of Theorem~\ref{thm:algo}. For now, we turn to an optimized version of this procedure, presented in \cref{alg:algo}. The main optimization leverages the structure of $\phi(\cdot, \cdot, j)$: fixing $j$, $\phi(i, i', j)$ depends only on $i' - i$. $\phi(\cdot, \cdot, j)$ is therefore a matrix with constant diagonals, i.e. a Toeplitz matrix. A key benefit of $n \times n$ Toeplitz matrices is that matrix-vector multiplication can be implemented in time $O(n\log(n))$ using the Fast Fourier Transform (see, e.g.,~\cite{B19}). This becomes useful to us once we rewrite the computation of $\alpha(j, \cdot, \cdot)$ using
\begin{align*}
    \hat\alpha(j-1, \cdot) =&\ \sum_{k < j} \alpha(j-1, \cdot, k) \\
    \alpha(j, \cdot, 1) =&\ \tau(\cdot) \times \left(\phi(\cdot, \cdot, j)^T\hat \alpha(j-1, \cdot)^T\right)
\end{align*} 
where $\times$ denotes element-wise product. This reduces each computation of $\alpha(j, \cdot, 1)$ in Line 9 of \cref{alg:algo} to time $O(n\log(n))$ and space $O(n)$.

In total, we spend time $O(m^2n)$ computing $\hat \alpha(\cdot, \cdot)$, time $O(mn\log(n))$ computing $\alpha(\cdot, \cdot, 1)$, and time $O(m^2)$ computing $\alpha(\cdot, \cdot, k)$ for $k > 1$. The result is overall time $O(mn\log(n) + m^2n)$. The space analysis essentially reduces to the space needed to store $\alpha$ while computing $\phi$ as needed. Details appear in the proof of Theorem~\ref{thm:algo}.

\begin{restatable}{theorem}{Algo}
\label{thm:algo}
    \algo~satisfies $\eps$-differential privacy, takes time $O(mn\log(n) + m^2n)$, and uses space $O(m^2n)$.
\end{restatable}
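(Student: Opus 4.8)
The plan is to prove the three claims in turn, with the privacy claim reducing entirely to a correctness statement for the sampler. By \cref{lem:m_q_prime} the distribution of $M_{Q'}$ coincides with that of $M_Q$, which is $\eps$-DP by \cref{lem:initial_dp}; so it suffices to show that \algo~draws its interval sequence $s$ exactly from $\Po{M_{Q'}}$ (the subsequent uniform-within-interval and sorting steps are verbatim those of $M_{Q'}$). For the forward pass I would establish, by induction on $j$, the invariant that $\alpha(j,i,k)$ equals the sum over all nondecreasing prefixes $(i_1,\dots,i_j)\in I^j$ whose final maximal constant run is exactly $k$ copies of $i$ (taking $i_0:=0$) of $\frac{1}{\gamma_j}\prod_{\ell=1}^{j}\phi(i_{\ell-1},i_\ell,\ell)\prod_{\ell=1}^{j}\tau(i_\ell)$, where $\gamma_j$ is the scale factor of the prefix. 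The base case $j=1$ is immediate. In the step, split the prefix at its last coordinate: if it ends in a single $i$ ($k=1$), the predecessor $i'$ ranges over all $i'<i$ and can close a run of any length $k'<j$, the appended $i$ is a fresh value so it leaves $\gamma$ unchanged, and the sum factors into exactly the first recursion; if it ends in $k>1$ copies of $i$, the predecessor is also $i$, the multiplicity of $i$ rises from $k-1$ to $k$ so $\gamma_j=k\cdot\gamma_{j-1}$, and the sum collapses to $\tau(i)\phi(i,i,j)\alpha(j-1,i,k-1)/k$, the second recursion.

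For the backward pass I would check that each draw realizes a correct conditional. Peeling the final edge to $i_{m+1}=n$, the invariant gives $\sum_{s}\frac{1}{\gamma(s)}\prod_{j\in[m+1]}\phi(i_{j-1},i_j,j)\prod_{j\in[m]}\tau(i_j)=\sum_{i,k}\alpha(m,i,k)\,\phi(i,n,m+1)$, so drawing $(i,k)$ proportional to the summand fixes $i_m$ and the length of the last run with the right probability. Conditioning on $(i,k)$, the residual mass over length-$(m-k)$ prefixes ending in some $i'<i$, after dividing out the constants $1/k!$, $\tau(i)^k$, and $\prod_{j=m-k+2}^{m}\phi(i,i,j)\cdot\phi(i,n,m+1)$, equals $\sum_{i'<i,\,k'}\alpha(m-k,i',k')\,\phi(i',i,m-k+1)$; hence the next draw is again a correct conditional, and the recursion terminates at the empty prefix. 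This shows \algo\ and $M_{Q'}$ produce the same output distribution, so \algo\ is $\eps$-DP by \cref{lem:initial_dp}.

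The time and space bounds I would verify as in the discussion preceding the theorem. The running time splits into $O(m^2n)$ for the $m$ prefix sums $\hat\alpha(j-1,\cdot)=\sum_{k<j}\alpha(j-1,\cdot,k)$; $O(mn\log n)$ for the $m$ updates $\alpha(j,\cdot,1)=\tau(\cdot)\times\bigl(\phi(\cdot,\cdot,j)^T\hat\alpha(j-1,\cdot)^T\bigr)$, each a single multiplication of the Toeplitz matrix $\phi(\cdot,\cdot,j)$ (constant along diagonals, since $\phi(i,i',j)$ depends only on $i'-i$) by a length-$n$ vector, computable by FFT in $O(n\log n)$; and at most $O(m^2n)$ for the $k>1$ entries, each obtained in $O(1)$ from $\alpha(j,i,k)=\tau(i)\phi(i,i,j)\alpha(j-1,i,k-1)/k$ — giving $O(mn\log n+m^2n)$ overall, with the backward draws, the $m$ uniform samples, and the final sort all lower-order. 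The space is dominated by the table $\alpha$, which has $O(m)$ layers $j$, $O(n)$ intervals $i$, and $O(m)$ run-lengths $k$, hence $O(m^2n)$ entries; the $\tau$ values, the current Toeplitz matrix (recomputed per $j$ in $O(n)$ scratch rather than stored for all $j$), and the $\hat\alpha$ vectors fit within $O(mn)$.

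The main obstacle is the forward-pass induction, specifically the bookkeeping of the scale function $\gamma$, which does not factor over edges. One has to carry it through the auxiliary ``current run-length'' index $k$, verify that a run's $1/(\text{length})!$ weight is accumulated one factor of $1/k$ per step and never recounted when the run closes, and confirm that it is the strict inequality $i'<i$ in both the recursion and the backward draws — not any vanishing of $\phi$, which is in fact nonzero on the diagonal — that makes the counted runs maximal. Once this invariant is pinned down, the rest of the argument is routine.
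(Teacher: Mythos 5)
Your proposal matches the paper's proof in all essentials: the same forward invariant on $\alpha(j,i,k)$ (unnormalized mass of nondecreasing length-$j$ prefixes ending in exactly $k$ copies of $i$, with the scale function $\gamma$ absorbed via the run-length index and the $1/k$ factor), the same backward conditional-sampling argument exploiting $\gamma(s_{\text{prefix}} + s_{\text{suffix}}) = \gamma(s_{\text{prefix}})\,\gamma(s_{\text{suffix}})$ when the prefix ends strictly below $i_{j+1}$, and the same time/space accounting via the Toeplitz structure of $\phi(\cdot,\cdot,j)$ and FFT matrix--vector multiplication. The only detail worth making explicit (as the paper does in its appendix) is that the matrix used in the FFT step must have its diagonal zeroed so that the product realizes the strict-inequality sum $\sum_{i'<i}$ you correctly identify as what keeps runs maximal; the zeroed matrix is still Toeplitz, so the $O(n\log n)$ per-step cost is unaffected.
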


\begin{algorithm}
\begin{algorithmic}[1]
   \STATE {\bfseries Input:} sorted $X = (x_1 \leq \ldots \leq x_n)$ clamped to data range $[a,b]$, quantiles $q_1, \ldots, q_m$, privacy parameter $\eps$
   \STATE Set $x_0 = a$, $x_{n+1} = b$, and $\Delta_{u_Q} = 2$
   \STATE Set $I = \{0,\dots,n\}$, $i_0 = 0$, and $i_{m+1} = n$
   \FOR{$i \in I$}
        \STATE Set $\alpha(1, i, 1) = \phi(0, i, 1) \tau(i)$
        \STATE $\qquad\qquad\quad\;\,= \exp\left(-\frac{\eps}{2\Delta_{u_Q}} |i - n_1|\right) \cdot (x_{i+1} - x_i)$
   \ENDFOR
   \FOR{$j = 2, \ldots, m$}
        \FOR{$i \in I$}
            \STATE Set $\hat\alpha(j-1, i) = \sum_{k < j} \alpha(j-1,i,k)$
        \ENDFOR
        \STATE Set $\alpha(j, \cdot, 1) = \tau(\cdot) \times \left(\phi(\cdot, \cdot, j)^T\hat \alpha(j-1, \cdot)^T\right)$
        \FOR{$k=2, \ldots, j$}
            \FOR{$i \in I$}
                \STATE Set $\alpha(j, i, k) = \tau(i)  \phi(i, i, j)  \alpha(j-1, i, k-1)/k$
            \ENDFOR
        \ENDFOR
   \ENDFOR
   \STATE Sample $(i,k) \propto \alpha(m,i,k) \phi(i, n, m+1)$
   \STATE Set $i_{m-k+1}, \dots, i_m = i$, and $j = m-k$
   \WHILE{$j > 0$}
        \STATE Sample $(i,k) \propto \alpha(j,i,k) \phi(i, i_{j+1}, j+1)$
        \STATE Set $i_{j-k+1}, \dots, i_j = i$, and $j = j-k$
   \ENDWHILE  
   \STATE Output uniform samples $\{o_j \sim_U [x_{i_j}, x_{i_j + 1})\}_{j = 1}^m$ in increasing order
\end{algorithmic}
\caption{Pseudocode for \algo}
\label{alg:algo}
\end{algorithm}

\textbf{Numerical improvements.} Note that the quantities involved in computing $\phi$ and $\alpha$ may be quite small, so we implement \algo~using logarithmic quantities to avoid underflow errors in our experiments. This is a common trick and is a numerical rather than algorithmic change, but for completeness we include its details in \cref{subsec:log_trick}. After computing these quantities, to avoid underflow in our final sampling steps, we use a ``racing'' sampling method that was previously developed for single-quantile exponential mechanisms. Since this is again a numerical improvement, details appear in \cref{subsec:racing}.

\textbf{Connection to graphical models.} As mentioned above, the dynamic program in \cref{alg:algo} is similar to the forward-backward algorithm from the graphical models literature, modulo accounting for $\gamma(s)$. In graphical models, it is often necessary to compute the probability of a sequence of hidden states. This requires normalizing by a sum of probabilities of sequences, and, naively, this sum has an exponential number of terms.  However, when probabilities decompose into products of score functions of adjacent states, the forward-backward algorithm makes the process efficient.  The extra $\gamma(s)$ term makes our sampling process more complex in a way that is similar to semi-Markov models~\cite{Y10}. In graphical model terms, $\gamma$ can be thought of as a prior that discourages repeats: $p_{\textrm{prior}}(s) \propto 1/\gamma(s)$.  This prior can also be written as a product of $n$ Poisson distributions, each with parameter $\lambda = 1$.  
\section{Accuracy Intuition}
\label{sec:utility_guarantees}
\algo~applies the exponential mechanism once to output $m$ quantiles.  The closest competitor algorithms also apply the exponential mechanism but use $m$ invocations to produce $m$ quantiles. To build intuition for why the former approach achieves better utility, we recall the standard accuracy guarantee for the exponential mechanism:
\begin{lemma}[\citet{MT07}]
\label{lem:exp_guarantee}
    Let $M$ be an $\eps$-DP instance of the exponential mechanism having score function $u$ with sensitivity $\Delta_u$ and output space $\Y$. Then for database $X$, with probability at least $1-\beta$, $M$ produces output $y$ such that
    $$u(X,y) \geq \max_{y^* \in \Y} u(X,y^*) - \frac{2\Delta_u \log(|\Y|/\beta)}{\eps}.$$
\end{lemma}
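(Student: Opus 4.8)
The plan is to run the standard argument that bounds the probability the exponential mechanism returns a low-utility output. Write $OPT = \max_{y^* \in \Y} u(X, y^*)$ and set $t = \frac{2\Delta_u \log(|\Y|/\beta)}{\eps}$; it then suffices to show that $\P{M}{u(X, M(X)) < OPT - t} \le \beta$, since taking complements yields the stated bound. By \cref{def:exp}, the probability that $M$ returns a particular $y$ equals $\exp\!\left(\frac{\eps u(X,y)}{2\Delta_u}\right)$ divided by the normalizer $Z = \sum_{y' \in \Y} \exp\!\left(\frac{\eps u(X,y')}{2\Delta_u}\right)$, so
\[
\P{M}{u(X, M(X)) < OPT - t} = \frac{1}{Z}\sum_{y \,:\, u(X,y) < OPT - t} \exp\!\left(\frac{\eps u(X,y)}{2\Delta_u}\right).
\]

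Next I would bound the two pieces asymmetrically. For the numerator, each summand is at most $\exp\!\left(\frac{\eps(OPT - t)}{2\Delta_u}\right)$ and there are at most $|\Y|$ of them, so the numerator is at most $|\Y|\exp\!\left(\frac{\eps(OPT-t)}{2\Delta_u}\right)$. For the denominator, I keep only the single term corresponding to a maximizer $y^*$ of $u(X,\cdot)$, giving $Z \ge \exp\!\left(\frac{\eps \cdot OPT}{2\Delta_u}\right)$. Dividing, the bad-output probability is at most $|\Y|\exp\!\left(-\frac{\eps t}{2\Delta_u}\right)$, and substituting $t = \frac{2\Delta_u \log(|\Y|/\beta)}{\eps}$ makes this exactly $|\Y| \cdot \frac{\beta}{|\Y|} = \beta$, as desired.

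There is no real obstacle here: the proof is entirely elementary once the explicit output density from \cref{def:exp} is in hand. The only point worth flagging is the deliberate asymmetry in the two bounds — inflating the numerator by a union-bound-style factor of $|\Y|$ while deflating the denominator all the way down to a single maximizing term — which is precisely what produces the logarithmic dependence on $|\Y|/\beta$. I would also remark that the normalizing constant $Z$ cancels cleanly, so the only structural assumption actually used is that $\Y$ is finite (needed both for $Z < \infty$ and for the count $|\Y|$ to be meaningful).
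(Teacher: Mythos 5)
Your proof is correct and is exactly the standard argument from McSherry--Talwar that the paper cites for this lemma (the paper itself includes no proof, deferring to \cite{MT07}): upper-bound the mass of low-utility outputs by $|\Y|\exp\bigl(\eps(OPT-t)/(2\Delta_u)\bigr)$, lower-bound the normalizer by the maximizer's single term, and solve for $t$. Nothing is missing.
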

\vspace{-5pt}
For simplicity, suppose we have uniform data where all interval widths $x_{i+1} - x_i$ are identical. As shown by the experiments in the next section, this is not necessary for \algo~to obtain good utility, but we assume it for easier intuition. Then (modulo the minor term $\gamma(s)$ that accounts for rare repeated intervals in the output), $\P{M_{Q'}}{s} \propto \exp\left(\frac{\eps \cdot u_{Q'}(X, s)}{2\Delta_{u_Q}}\right)$.  This means that \algo's process of sampling intervals draws from a distribution whose shape is identical to an exponential mechanism with utility function $u_{Q'}$, but mismatched sensitivity term $\Delta_{u_Q} = 2$.  Since the proof of  \cref{lem:exp_guarantee} does not rely on the utility function matching the sensitivity term, we can still apply it to determine the accuracy of this interval sampling procedure.  The output space $\Y$ for \algo's interval-sampling has size $|S_\nearrow| \leq n^m$, so we expect to sample intervals yielding quantiles that in total misclassify $O(m\log(n)/\eps)$ points.

In contrast, $m$ invocations of a single-quantile exponential mechanism requires each invocation to satisfy roughly $\eps_i = \eps/\sqrt{m}$-DP (advanced composition). Because each invocation uses an output space of size $O(n)$, the total error guarantee via Lemma~\ref{lem:exp_guarantee} scales like $O(m\log(n)/\eps_i)$. Since $m/\eps_i = \omega(m)/\eps$ for even the best known composition bounds for the exponential mechanism~\cite{DDR20}, these approaches incur error with a superlinear dependence on $m$. This contrasts with \algo's error, which has only a linear dependence on $m$.

\section{Experiments}
\label{sec:experiments}
We now empirically evaluate \algo~against three alternatives: \apq, \csmooth, and \aggtree. Discussion of some omitted alternatives appears in \cref{sec:bad_algos}. All experiment code is publicly available~\cite{Go21}.

\subsection{Comparison Algorithms}
\label{subsec:algos}
\paragraph{\apq :} Our first comparison algorithm \apq~uses independent applications of the exponential mechanism. \citet{S11} introduced the basic \pq~algorithm for estimating one quantile, and it has since been incorporated into the SmartNoise~\cite{S20} and IBM~\cite{I19} differential privacy libraries. \pq~thus gives us a meaningful baseline for a real-world approach. \pq~uses the exponential mechanism to estimate a single quantile $q$ via the utility function
$u(X,o) = ||\{x \in X \mid x \leq o\}| - qn|$.
\begin{lemma}
\label{lem:pq_sensitivity}
$u$ defined above has $L_1$ sensitivity $\Delta_{u} = 1$.
\end{lemma}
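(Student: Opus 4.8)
The plan is to follow the same template as the proof of \cref{lem:algo_sensitivity}: fix the output, observe that the only database-dependent quantity in $u$ is a count, bound how much that count can move between neighbors, and finish with the triangle inequality.

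Concretely, fix an arbitrary output $o$ and set $c(X) = |\{x \in X \mid x \leq o\}|$, so that $u(X, o) = |c(X) - qn|$. Let $X \sim X'$ be neighbors. Since we use swap privacy, $|X| = |X'| = n$; this is the one point where the convention matters, because it guarantees the constant $qn$ is the same in $u(X,o)$ and $u(X',o)$. Passing from $X$ to $X'$ replaces a single element of $X$ by a single new element, which changes the number of points $\leq o$ by at most one, so $|c(X) - c(X')| \leq 1$. The reverse triangle inequality then gives
\[
    |u(X,o) - u(X',o)| = \bigl|\, |c(X) - qn| - |c(X') - qn| \,\bigr| \leq |c(X) - c(X')| \leq 1 .
\]
As $o$ and the neighboring pair were arbitrary, $\Delta_u \leq 1$.

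For the matching lower bound (so that $\Delta_u = 1$, not merely $\leq 1$), I would exhibit a witness: pick an output $o$ and a database $X$ having exactly $k$ points $\leq o$ for an integer $k \in \{0,\dots,n-1\}$ with $qn \notin (k, k+1)$ (such a $k$ exists outside a degenerate small-$n$ case), then let $X'$ swap one of the points $\leq o$ for a point $> o$; a short computation gives $|u(X,o) - u(X',o)| = 1$. There is no real obstacle here — the argument is routine — and the only things that need care are the swap-model remark above, which ensures $n$ (and hence $qn$) does not itself change between neighbors, and getting the direction of the reverse triangle inequality right.
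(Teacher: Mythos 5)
Your argument is essentially the paper's: fix $o$, observe that a swap changes the count of points $\leq o$ by at most one (the swap model keeping $n$, hence $qn$, fixed), and conclude that the utility changes by at most one — the paper does this with a two-case split where you use the reverse triangle inequality. Your optional tightness witness is fine in spirit (the paper simply asserts equality in the crossing case), though the swap direction should match your condition: with $k$ points $\leq o$ and $qn \notin (k,k+1)$ you should swap a point $> o$ for one $\leq o$ so the count goes from $k$ to $k+1$ (or instead require $qn \notin (k-1,k)$ for the swap you describe).
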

\vspace{-5pt}
\begin{proof}
    Consider swapping $x \in X$ for $x'$, and fix some $o$. If $x, x' \leq o$ or $x, x' > o$, then $u(X,o) = u(X',o)$. If exactly one of $x$ or $x'$ is $\leq o$, then $|u(X,o) - u(X',o)| = 1$.
\end{proof}
\vspace{-5pt}
\pq~takes user-provided data bounds $a$ and $b$ and runs on $X^+ = X \cup \{a,b\}$. After sorting $X^+$ into intervals of adjacent data points $I_0, \ldots, I_n$, \pq~selects an interval $I_j$ with probability proportional to
$$\mathrm{score}(X,I_j) = \exp(-\eps|j - qn|/2) \cdot |I_j|$$
and randomly samples the final quantile estimate from $I_j$.

To estimate $m$ quantiles with \apq, we call \pq~$m$ times with $\eps$ computed using the exponential mechanism's nonadaptive composition guarantee~\cite{DDR20}. Details appear in \cref{sec:app_comparisons}, but we note that this is the tightest known composition analysis for the exponential mechanism. Since our experiments use $n=1000$ data points, we always use $\delta = 10^{-6}$ in accordance with the recommendation that $\delta \ll \tfrac{1}{n}$ (see the discussions around the definition of differential privacy from~\citet{DR14} and~\citet{V17}).

\narxiv{\begin{figure*}[!htbp]
\begin{center}
    \includegraphics[scale=0.5]{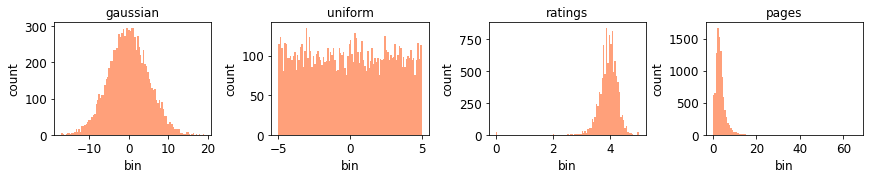}
    \vspace{-10pt}
    \caption{Histograms for the four datasets. Each plot uses 10,000 random samples, and the data range is divided into 100 equal-width bins.}
    \label{fig:data_histograms}
\end{center}
\end{figure*}}

\paragraph{\csmooth :} Our second comparison algorithm is \csmooth, which combines the \emph{smooth sensitivity}  framework introduced by \narxiv{\citet{NRS07}}\arxiv{Nissim, Raskhodnikova, and Smith~\cite{NRS07}} with concentrated differential privacy (CDP)~\cite{DR16, BS16}. The basic idea of smooth sensitivity  is to circumvent global sensitivity by instead using a smooth analogue of local sensitivity. This is useful for problems where the global sensitivity is large only for ``bad'' datasets.
\begin{definition}
\label{def:sensitivities}
    For function $f \colon X^n \to \mathbb{R}$, the \emph{local sensitivity} $\Delta_f(X)$ of $f$ for dataset $X$ is $\max_{X' \mid X \sim X'} |f(X) - f(X')|$.
\end{definition}
Recall that global sensitivity is defined over all possible pairs of datasets. In contrast, local sensitivity is also parameterized by a fixed dataset $X$ and defined only over neighbors of $X$. It is therefore possible that $\Delta_f(X) \ll \Delta_f$. For example, if $M$ is the median function and we set $X = [-100, 100]$, then $\Delta_M(\{-1,0,1\}) = 1$ while $\Delta_M(\{-100,0,100\}) = 100$. However, this also shows that local sensitivity itself reveals information about the dataset. The insight of~\citet{NRS07} is that it is possible to achieve differential privacy and take advantage of lower local sensitivity by adding noise calibrated to a ``smooth'' approximation of $\Delta_f(X)$.
\begin{definition}[\citet{NRS07}]
    For $t > 0$, the \emph{$t$-smooth sensitivity} of $f$ on database $X$ of $n$ points is
    $$\S_f^t(X) = \max_{X' \in \X^n} e^{-t \cdot d(X,X')} \cdot \Delta_f(X').$$
\end{definition}
\vspace{-5pt}
Details for computing the median's smooth sensitivity appear in \cref{sec:app_comparisons}. We now turn to the CDP portion of \csmooth. CDP is a variant of differential privacy that offers comparable privacy guarantees with often tighter privacy analyses.~\citet{BS19} showed how to combine CDP with the smooth sensitivity framework. Our experiments use the Laplace Log-Normal noise distribution, which achieved the best accuracy results in the experiments of~\citet{BS19}.

One complication of \csmooth~is the need to select several parameters to specify the noise distribution. We tuned these parameters on data from $N(0,1)$ to give \csmooth~the strongest utility possible without granting it distribution-specific advantages (see ~\cref{sec:app_comparisons}). To compare \algo 's pure DP guarantee to \csmooth 's CDP guarantee, we use the following lemma:

\begin{lemma}[Proposition 1.4~\cite{BS16}]
\label{lem:pure_to_concentrated}
    If an algorithm is $\eps$-DP, then it is also $\tfrac{\eps^2}{2}$-CDP.
\end{lemma}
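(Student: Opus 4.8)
The plan is to prove this directly by reading both definitions through the lens of the \emph{privacy loss} random variable and then invoking a subgaussian moment bound. Recall that $\A$ being $\tfrac{\eps^2}{2}$-CDP (in the zCDP sense of~\cite{BS16}) means that for every pair of neighbors $X \sim X'$ and every order $\alpha \in (1,\infty)$ the R\'enyi divergence obeys $D_\alpha(\A(X)\|\A(X')) \le \tfrac{\eps^2}{2}\alpha$. So I would fix neighbors $X \sim X'$, write $P = \A(X)$ and $Q = \A(X')$, and note that $\eps$-DP with finite $\eps$ forces $P$ and $Q$ to share the same support; hence the privacy loss $L(y) = \ln\tfrac{p(y)}{q(y)}$ is well defined there and, again by $\eps$-DP applied in both directions, satisfies $|L(y)| \le \eps$ pointwise on the support.

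Next I would rewrite the R\'enyi divergence as a moment generating function of the privacy loss under $P$, namely $D_\alpha(P\|Q) = \tfrac{1}{\alpha-1}\ln \E{y\sim P}{e^{(\alpha-1)L(y)}}$. Since $L$ is a bounded random variable under $P$ (taking values in an interval of width $2\eps$), Hoeffding's lemma gives $\ln \E{y\sim P}{e^{(\alpha-1)(L(y) - \mu)}} \le \tfrac{(\alpha-1)^2\eps^2}{2}$, where $\mu = \E{y\sim P}{L(y)} = \kl{P}{Q}$. Dividing by $\alpha - 1 > 0$ then yields $D_\alpha(P\|Q) \le \kl{P}{Q} + \tfrac{(\alpha-1)\eps^2}{2}$.

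The remaining step is to show $\kl{P}{Q} \le \tfrac{\eps^2}{2}$, so that the two terms above combine to the claimed $\tfrac{\alpha\eps^2}{2}$. For this I would use the change-of-measure identity $\E{y\sim P}{e^{-L(y)}} = \int q(y)\,dy = 1$ together with a second application of Hoeffding's lemma, this time to $-L$ under $P$ with parameter $s = 1$: this gives $1 = \E{y\sim P}{e^{-L(y)}} \le e^{-\mu}\,e^{\eps^2/2}$, hence $\mu = \kl{P}{Q} \le \tfrac{\eps^2}{2}$. Combining, $D_\alpha(P\|Q) \le \tfrac{\eps^2}{2} + \tfrac{(\alpha-1)\eps^2}{2} = \tfrac{\eps^2}{2}\alpha$ for every $\alpha \in (1,\infty)$ and every pair of neighbors, which is exactly the definition of $\tfrac{\eps^2}{2}$-CDP.

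The main obstacle is conceptual rather than computational: a naive ``$\eps$-DP $\Rightarrow$ the privacy loss is $\eps$-subgaussian'' argument only bounds $D_\alpha(P\|Q)$ by $\mu + \tfrac{(\alpha-1)\eps^2}{2}$ with an uncontrolled mean term $\mu = \kl{P}{Q}$, and it is precisely the fact that $\eps$-DP holds in \emph{both} directions (equivalently, that the reversed privacy loss also lies in $[-\eps,\eps]$ and has exponential moment equal to $1$) that pins down $\mu \le \tfrac{\eps^2}{2}$. Everything else is routine; as an alternative one could run the same computation on the moment generating function of the privacy loss under $Q$ using $\E{y\sim Q}{e^{L(y)}} = 1$, with identical bookkeeping.
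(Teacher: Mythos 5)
Your proof is correct. Note first that the paper itself does not prove this statement at all: it is imported verbatim as Proposition~1.4 of Bun and Steinke~\cite{BS16}, so there is no in-paper argument to match; the comparison is really with the proof in \cite{BS16}. There, the heart of the matter is their Lemma~3.2 (if $D_\infty(P\|Q)\le\eps$ and $D_\infty(Q\|P)\le\eps$ then $D_\alpha(P\|Q)\le\tfrac{\eps^2}{2}\alpha$ for all $\alpha>1$), which they establish by reducing, via convexity, to distributions supported on two points with log-ratio $\pm\eps$ and then verifying a scalar inequality. Your route replaces that reduction with Hoeffding's lemma applied to the bounded privacy loss $L\in[-\eps,\eps]$, and you correctly isolate the one step where a naive subgaussianity argument is insufficient: bounding the mean $\mu=\kl{P}{Q}$. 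Your fix --- using the normalization $\E{y\sim P}{e^{-L(y)}}=1$ (valid because $\eps$-DP in both directions gives mutual absolute continuity) together with a second application of Hoeffding's lemma at $s=1$ to get $\mu\le\tfrac{\eps^2}{2}$ --- is sound, and combining it with $D_\alpha(P\|Q)\le\mu+\tfrac{(\alpha-1)\eps^2}{2}$ yields exactly $\tfrac{\eps^2}{2}\alpha$, i.e.\ $\tfrac{\eps^2}{2}$-zCDP. (As a sanity check, the extremal two-point case has $\mu=\eps\tanh(\eps/2)\le\tfrac{\eps^2}{2}$, so your mean bound is consistent with the sharp analysis.) The only cosmetic caveat is measure-theoretic: $\eps$-DP gives $|L|\le\eps$ only almost everywhere via the Radon--Nikodym derivative, which is all Hoeffding needs. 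What your approach buys is a short, self-contained derivation from standard concentration tools with the same constant; what the \cite{BS16} two-point reduction buys is a tight characterization of the extremal distributions, which their paper also exploits elsewhere.
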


We thus evaluate our $\eps$-DP algorithm \algo~against an $\tfrac{\eps^2}{2}$-CDP \csmooth. This comparison favors \csmooth: recalling our requirement that approximate DP algorithms have $\delta \leq 10^{-6}$, the best known generic conversion from CDP to approximate DP only says that a $\tfrac{1}{2}$-CDP algorithm is $(\eps,10^{-6})$-DP for $\eps \geq 5.76$ (Proposition 1.3,~\cite{BS16}). A more detailed discussion of DP and CDP appears in Section 4 of the work of \narxiv{\citet{CKS20}}\arxiv{Canonne, Kamath, and Steinke~\cite{CKS20}}.

As with \apq, to estimate $m$ quantiles with \csmooth, we call it $m$ times with an appropriately reduced privacy parameter. This time, we use CDP's composition guarantee:
\vspace{-10pt}
\begin{lemma}[Proposition 1.7~\cite{BS16}]
\label{lem:cdp_composition}
    The composition of $k$ $\rho$-CDP algorithms is $k\rho$-CDP.
\end{lemma}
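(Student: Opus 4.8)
The plan is to reduce the statement to the two-fold case by a one-line induction and then prove (adaptive) two-fold composition directly from the Rényi-divergence characterization of CDP. Recall that $M$ is $\rho$-CDP precisely when $D_\alpha(M(X)\|M(X')) \le \rho\,\alpha$ for every pair of neighbors $X \sim X'$ and every order $\alpha \in (1,\infty)$, where $D_\alpha$ is the order-$\alpha$ Rényi divergence. Granting the two-fold statement --- if $M_1$ is $\rho_1$-CDP and $M_2(\cdot, y_1)$ is $\rho_2$-CDP for each fixed auxiliary input $y_1$, then $M(X) := (M_1(X), M_2(X, M_1(X)))$ is $(\rho_1+\rho_2)$-CDP --- the claim follows by induction on $k$: the base case $k=1$ is trivial, and if the composition $M_{1:j}$ of the first $j$ algorithms is $j\rho$-CDP, then composing it with the $(j{+}1)$-th $\rho$-CDP algorithm (taking $\rho_1 = j\rho$, $\rho_2 = \rho$ above) yields a $(j{+}1)\rho$-CDP algorithm.

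To prove the two-fold step, fix neighbors $X \sim X'$ and an order $\alpha > 1$; let $P$ and $Q$ be the output laws of $M(X)$ and $M(X')$. Factor each joint law as a marginal times a conditional: $P(y_1,y_2) = P_1(y_1)\,P_2(y_2 \mid y_1)$, where $P_1$ is the law of $M_1(X)$ and $P_2(\cdot \mid y_1)$ is the law of $M_2(X, y_1)$, and similarly $Q(y_1,y_2) = Q_1(y_1)\,Q_2(y_2 \mid y_1)$ with $Q_2(\cdot \mid y_1)$ the law of $M_2(X', y_1)$. The crucial point is that $P_2(\cdot \mid y_1)$ and $Q_2(\cdot \mid y_1)$ are outputs of $M_2$ on the \emph{neighboring} databases $X, X'$ fed the \emph{same} auxiliary value $y_1$, so $D_\alpha\bigl(P_2(\cdot \mid y_1)\,\big\|\,Q_2(\cdot \mid y_1)\bigr) \le \rho_2\,\alpha$ for every $y_1$. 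Using the identity $e^{(\alpha-1)D_\alpha(P\|Q)} = \int P(y)^{\alpha} Q(y)^{1-\alpha}\,dy$ and splitting the integral along the factorization,
\begin{align*}
e^{(\alpha-1)D_\alpha(P\|Q)}
&= \int P_1(y_1)^{\alpha} Q_1(y_1)^{1-\alpha}\left(\int P_2(y_2\mid y_1)^{\alpha} Q_2(y_2\mid y_1)^{1-\alpha}\,dy_2\right) dy_1 \\
&= \int P_1(y_1)^{\alpha} Q_1(y_1)^{1-\alpha}\, e^{(\alpha-1)D_\alpha(P_2(\cdot\mid y_1)\|Q_2(\cdot\mid y_1))}\,dy_1 \\
&\le e^{(\alpha-1)\rho_2\alpha}\int P_1(y_1)^{\alpha} Q_1(y_1)^{1-\alpha}\,dy_1 \\
&= e^{(\alpha-1)\rho_2\alpha}\cdot e^{(\alpha-1)D_\alpha(P_1\|Q_1)} \\
&\le e^{(\alpha-1)(\rho_1+\rho_2)\alpha},
\end{align*}
the last step using $D_\alpha(P_1\|Q_1) = D_\alpha(M_1(X)\|M_1(X')) \le \rho_1\alpha$. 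Taking logarithms and dividing by $\alpha - 1 > 0$ gives $D_\alpha(P\|Q) \le (\rho_1+\rho_2)\alpha$; since the neighbors and $\alpha$ were arbitrary, $M$ is $(\rho_1+\rho_2)$-CDP.

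The main obstacle is not the algebra but making the conditional decomposition rigorous: one must work with densities relative to a common dominating measure, handle degenerate cases where a density vanishes (so some $D_\alpha$ may be $+\infty$ and the bound is vacuous, which requires care with the convention for $0^{1-\alpha}$), and --- most importantly --- justify that \emph{adaptive} composition, in which $M_2$'s internal randomness is correlated with the value $y_1$ it receives through their shared dependence on $X$, nonetheless factors as written. This last point is exactly where naive $(\eps,\delta)$-composition would be awkward but the Rényi formulation goes through cleanly, because the per-$y_1$ divergence bound $D_\alpha(P_2(\cdot\mid y_1)\|Q_2(\cdot\mid y_1)) \le \rho_2\alpha$ holds uniformly in $y_1$. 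Once that factorization is justified, the rest is the two-line induction above.
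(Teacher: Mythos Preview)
Your argument is correct and is essentially the standard proof of zCDP composition via the R\'enyi-divergence chain rule, as given in~\cite{BS16}. However, note that the paper does not actually prove this lemma: it is quoted without proof as Proposition~1.7 of~\cite{BS16} and used only as a black box to set the per-call privacy parameter for \csmooth. So there is no ``paper's own proof'' to compare against; you have supplied a valid proof where the paper simply cites one.
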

From \cref{lem:pure_to_concentrated} the overall desired privacy guarantee is $\tfrac{\eps^2}{2}$-CDP, so we use $\eps' = \tfrac{\eps}{\sqrt{m}}$ in each call.

\paragraph{\aggtree:} The final comparison algorithm, \aggtree, implements the tree-based counting algorithm~\cite{DNPRY10, CSS11} for CDF estimation. This $\eps$-DP algorithm produces a data structure that yields arbitrarily many quantile estimates. Informally, \aggtree~splits the data domain into buckets and then builds a tree with branching factor $b$ and height $h$ where each leaf corresponds to a bucket. Each node of the tree has a count, and each data point increments the count of $h$ nodes. It therefore suffices to initialize each node with $\lap{h/\eps}$ noise to guarantee $\eps$-DP for the overall data structure, and the data structure now supports arbitrary range count queries. A more detailed exposition appears in the work of~\citet{KU20}. As with \csmooth, our experiments tune the hyperparameters $b$ and $h$ on $N(0,1)$ data. We also use the aggregation technique described by~\citet{H15}, which combines counts at different nodes to produce more accurate estimates.

\subsection{Data Description}
\label{subsec:data}
We evaluate our four algorithms on four datasets: synthetic Gaussian data from $N(0,5)$, synthetic uniform data from $U(-5,5)$, and real collections of book ratings and page counts from Goodreads~\cite{S19}  (Figure~\ref{fig:data_histograms}).

\arxiv{
\begin{figure*}[!htbp]
\begin{center}
    \includegraphics[scale=0.5]{images/histograms.png}
    \caption{Histograms for the four datasets. Each plot uses 10,000 random samples, and the data range is divided into 100 equal-width bins.}
    \label{fig:data_histograms}
\end{center}
\end{figure*}
\vspace{-20pt}}

\subsection{Accuracy Experiments}
\label{subsec:experiment_acc}
\narxiv{
\begin{figure*}[!htbp]
\begin{center}
    \includegraphics[scale=0.42]{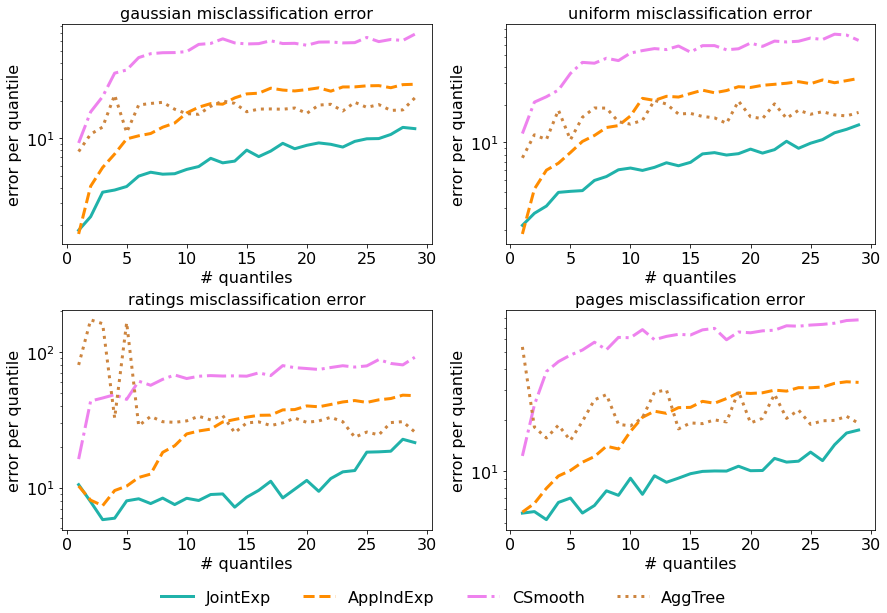}
\end{center}
\vspace{-15pt}
\caption{Average \# misclassified points per quantile vs. \# quantiles, averaged over 50 trials with $\eps = 1$. Note the logarithmic $y$-axis.}
\label{fig:error_experiments}
\end{figure*}}
\arxiv{
\begin{figure*}[!htbp]
\begin{center}
    \includegraphics[scale=0.42]{images/big_error.png}
\end{center}
\vspace{-15pt}
\caption{Average \# misclassified points per quantile vs. \# quantiles, averaged over 50 trials with $\eps = 1$. Note the logarithmic $y$-axis.}
\label{fig:error_experiments}
\end{figure*}}

Our error metric is the number of ``missed points'': for each desired quantile $q_j$, we take the true quantile estimate $o_j$ and the private estimate $\hat o_j$, compute the number of data points between $o_j$ and $\hat o_j$, and sum these counts across all $m$ quantiles. For each dataset, we compare the number of missed points for all five algorithms as $m$ grows. Additional plots for distance error appear in \cref{sec:app_comparisons}, but we note here that the trends are largely the same.

In each case, the requested quantiles are evenly spaced. $m=1$ is median estimation, $m=2$ requires estimating the 33rd and 67th percentiles, and so on.  We average scores across 20 trials of 1000 random samples. For every experiment, we take $[-100,100]$ as the (loose) user-provided data range. For the Goodreads page numbers dataset, we also divide each value by 100 to scale the values to $[-100,100]$. Experiments for $\eps=1$ appear in Figure~\ref{fig:error_experiments}. 

Across all datasets, a clear effect appears: for a wide range of the number of quantiles $m$, \algo~dominates all other algorithms. At $m=1$, \algo~matches \apq~and obtains roughly an order of magnitude better error than \csmooth~or \aggtree. As $m$ grows, \algo~consistently obtains average quantile error roughly 2-3 times smaller than the closest competitor, until the gap closes around $m=30$. \algo~thus offers both the strongest privacy guarantee and the highest utility for estimating any number of quantiles between $m=1$ and approximately $m=30$.

\subsection{Time Experiments}
\label{subsec:experiment_time}
\arxiv{
\begin{figure}
\begin{center}
\includegraphics[scale=0.5]{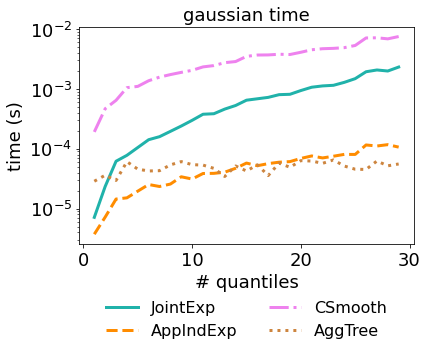}
\end{center}
\caption{Time vs \# quantiles $m$, $\eps=1$, averaged across 50 trials of 1,000 samples.}
\label{fig:arxiv_time}
\end{figure}}
We conclude by evaluating the methods by runtime. The number of data points and quantiles are the main determinants of time, so we only include time experiments using Gaussian data. All experiments were run on a machine with two CPU cores and 100GB RAM. As seen in \arxiv{\cref{fig:arxiv_time}}\narxiv{\cref{fig:narxiv_time}}, \algo~has time performance roughly in between that of the slowest algorithm, \csmooth, and \apq~or \aggtree. For estimating $m=30$ quantiles, \algo~takes roughly 1 ms for $n=1,000$ points and slightly under 1 minute for $n=1$ million points.
\narxiv{
\begin{figure}[!htbp]
    \begin{center}
        \includegraphics[scale=0.43]{images/gaussian_time.png}
    \end{center}
    \vspace{-15pt}
\caption{Time vs \# quantiles $m$ for $\eps=1$, averaged across 50 trials of 1,000 samples.}
\label{fig:narxiv_time}
\end{figure}}
\section{Future Directions}
\label{sec:conc}
In this work we constructed a low-sensitivity exponential mechanism for differentially private quantile estimation and designed a dynamic program to sample from it efficiently. The result is a practical algorithm that achieves much better accuracy than existing methods. A possible direction for future work is exploring other applications of the exponential mechanism where the utility function is low sensitivity and can be decomposed into ``local'' score functions, as in the pairwise interval terms of $\phi$. More precisely, by analogy to the graphical models techniques generally known as belief propagation~\cite{P82}, any utility function whose outputs have a chain or tree dependency structure should be tractable to sample.
\section{Acknowledgements}
We thank Thomas Steinke for discussions of concentrated differential privacy; Andrés Muñoz Medina for comments on an early draft of this paper; Uri Stemmer for discussion of the threshold release literature; and Peter Kairouz and Abhradeep Guha Thakurta for discussion of the aggregated tree mechanism.

\bibliography{references}
\bibliographystyle{plainnat}

\appendix
\onecolumn
\section{Full Proofs}
\label{subsec:proofs}
We start with the add-remove version of the sensitivity analysis for $\Delta_{u_Q}$. We proved the swap version as Lemma~\ref{lem:algo_sensitivity} in the main body, and this was the focus of the paper. The (slightly more favorable) add-remove version appears below for completeness.
\begin{lemma}
    In the add-remove model, $\Delta_{u_Q} = 2[1 - \min_{j \in  [m+1]}(q_j - q_{j-1})].$
\end{lemma}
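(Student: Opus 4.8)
The plan is to mirror the swap-model argument for Lemma~\ref{lem:algo_sensitivity}, but additionally track the effect that changing the database size has on the targets $n_j = (q_j - q_{j-1})|X|$. First I would put the neighbor relation in a canonical form: since $|u_Q(X,o) - u_Q(X',o)|$ is symmetric in $X,X'$ and the only constraint in the add-remove model is $|X| = |X'| \pm 1$, it suffices to bound $|u_Q(X,o) - u_Q(X \cup \{x^*\}, o)|$ where $X$ has $n$ points and $x^*$ is an arbitrary added point. The resulting bound will not depend on $n$, so it equals the global sensitivity.

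For the upper bound, fix $o$ and let $j^*$ be the index of the interval $[o_{j^*-1}, o_{j^*})$ into which $x^*$ falls. Writing primes for the $X' = X \cup \{x^*\}$ quantities, I would expand $u_Q(X,o) - u_Q(X',o) = \sum_{j \in [m+1]} \bigl(|n'(o_{j-1},o_j) - n_j'| - |n(o_{j-1},o_j) - n_j|\bigr)$. For $j \ne j^*$ only the target changes, by $n_j' - n_j = q_j - q_{j-1}$, so that term has magnitude at most $q_j - q_{j-1}$ by the triangle inequality. For $j = j^*$ the count increases by $1$ and the target by $q_{j^*} - q_{j^*-1}$, so the argument of the absolute value shifts by $1 - (q_{j^*} - q_{j^*-1}) \in (0,1]$, bounding that term by $1 - (q_{j^*} - q_{j^*-1})$. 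Summing and using $\sum_{j \in [m+1]}(q_j - q_{j-1}) = q_{m+1} - q_0 = 1$ yields $|u_Q(X,o) - u_Q(X',o)| \le \bigl(1 - (q_{j^*}-q_{j^*-1})\bigr) + \bigl(1 - (q_{j^*}-q_{j^*-1})\bigr) = 2\bigl(1 - (q_{j^*}-q_{j^*-1})\bigr) \le 2\bigl(1 - \min_{j \in [m+1]}(q_j-q_{j-1})\bigr)$.

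For the matching lower bound I would exhibit a tight instance: choose $j^*$ attaining $\min_j(q_j - q_{j-1})$, pick any $o$ with $o_{j^*-1} < o_{j^*}$, place all $n$ points of $X$ inside $[o_{j^*-1}, o_{j^*})$ and none elsewhere, and take $x^*$ in the same interval. Then every term of the telescoped sum has the same sign: for $j \ne j^*$ the count $0$ sits below both targets (so the term equals $+(q_j - q_{j-1})$), and for $j = j^*$ the count exceeds both targets (so the term equals $+(1 - (q_{j^*}-q_{j^*-1}))$, using $q_{j^*} - q_{j^*-1} < 1$ when $m \ge 1$). Hence the difference is exactly $2\bigl(1 - (q_{j^*}-q_{j^*-1})\bigr)$, matching the upper bound.

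The only real obstacle is the bookkeeping in the lower-bound construction: one must check that the chosen $X$, $X'$, and $o$ make all $m+1$ absolute-value terms aligned so nothing cancels, and that the possible non-integrality of $n_j = (q_j - q_{j-1})n$ causes no trouble — which it does not, since $n(\cdot,\cdot)$ and the increments $n_j' - n_j$ interact only through signs. Everything else is just the triangle inequality applied term by term, exactly as in the swap case.
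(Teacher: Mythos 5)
Your proof is correct and, for the upper bound, follows essentially the same route as the paper: isolate the interval $[o_{j^*-1},o_{j^*})$ containing the added point, bound its term by $1-(q_{j^*}-q_{j^*-1})$ and every other term by $q_j-q_{j-1}$ via the (reverse) triangle inequality, and use that the quantile gaps sum to $1$ before maximizing over $j^*$. Your explicit tight instance (all points placed in a minimum-gap interval so that no cancellation occurs) goes slightly beyond the paper, whose proof only establishes the upper bound despite the lemma's stated equality; the construction checks out, and the upper bound is all that matters for the privacy guarantee.
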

\begin{proof}
    Consider neighboring databases $X' = X \cup \{x'\}$ where $|X'| = n' = n+1$ and $|X| = n$. Let $n'(\cdot, \cdot)$ denote an interval count using $X'$, and let $n(\cdot, \cdot)$ denote an interval count using $X$. All data points are clipped to $[a,b]$, $o_0 = a$, and $o_{m+1} = b+1$, so there exists some $[o_{j^*-1}, o_{j^*})$ containing $x'$. $o$ is nondecreasing and these intervals are half-open, so these intervals do not intersect. Thus, there is exactly one $[o_{j^*-1}, o_{j^*})$ containing $x'$. Then for $j \neq j^*$, $n'(o_{j-1}, o_j) = n(o_{j-1}, o_j)$ and $(q_{j} - q_{j-1})n' - (q_{j} - q_{j-1})n = q_{j} - q_{j-1}$. Thus
    $$u_Q(X, o) = -|n(o_{j^*-1}, o_{j^*}) - n_{j^*}| - \sum_{j \neq j^*} |n(o_{j-1}, o_j) - n_j|$$
    and
    $$u_Q(X', o) = -|n(o_{j^*-1}, o_{j^*}) + 1 - (q_j^* - q_{j^*-1})(n+1)| - \sum_{j \neq j^*} |n(o_{j-1}, o_j) - (q_j - q_{j-1})(n+1)|.$$
    The distance between $u_Q(X, o)$ and $u_Q(X', o)$ contributed by the first term is
    $$||n(o_{j^*-1}, o_{j^*}) - (q_{j^*} - q_{j^*-1})n| - |n(o_{j^*-1}, o_{j^*}) + 1 - (q_{j^*} - q_{j^*-1})(n+1)|| = 1 - (q_{j^*} - q_{j^*-1}),$$
    and the distance contributed by the second term is
    $$\sum_{j \neq j^*} ||n(o_{j-1}, o_j) - (q_j - q_{j-1})n| - |n(o_{j-1}, o_j) - (q_j - q_{j-1})(n + 1)|| \leq \sum_{j \neq j^*} (q_j - q_{j-1}).$$
    Thus
    \begin{align*}
        |u_Q(X, o) - u_Q(X', o)| \leq&\ 1 - (q_{j^*} - q_{j^*-1}) + \sum_{j \neq j^*} (q_j - q_{j-1}) \\
        =&\ 2[1 - (q_{j^*} - q_{j^*-1})].
    \end{align*}
    The last equality follows from the fact that the sum over all quantile gaps is $1$, so the sum over all but the $q_{j^*} - q_{j^* - 1}$ gap is $1 - (q_{j^*} - q_{j^* - 1})$.  The quantity
    $2[1 - (q_{j^*} - q_{j^*-1})]$ is maximized by minimizing $(q_{j^*} - q_{j^*-1})$, which gives the final sensitivity bound.
\end{proof}
Next, we verify that the finite sampling improvement from \cref{subsec:finite} still samples from the correct distribution.
\MQPrime*
\begin{proof}
    Recall that the output space for $M_Q$ was $O_\nearrow = \{o = (o_1, \ldots, o_m) \mid a \leq o_1 \leq \cdots \leq o_m \leq b\}$. Define function $h$ on $[a,b]$ by $h(y) = |\{x \in X \mid x < y\}|$. Then $n(u,v) = h(v) - h(u)$, $h(o_0) = h(a) = 0 = i_0$, and $h(o_{m+1}) = h(b+1) = n = i_{m+1}$. Thus
    \begin{align*}
        u_Q(X,o) =&\ -\sum_{j \in [m+1]} |n(o_{j-1}, o_j) - n_j| \\
        =&\ -\sum_{j \in [m+1]} | h(o_j) - h(o_{j-1}) - n_j| \\
        =&\ u_{Q'}(X, (h(o_1), \ldots, h(o_m))).
    \end{align*}
    Therefore the normalization term for the distribution defined in \cref{eq:Q_density} is
    \begin{align}
        Z_Q =&\ \int_{O_\nearrow} \exp\left(\frac{\eps}{2\Delta_{u_Q}} \cdot u_Q(X, o)\right) do \nonumber \\
        =&\ \int_{O_\nearrow} \exp\left(\frac{\eps}{2\Delta_{u_Q}} \cdot u_{Q'}(X, (h(o_1), \ldots, h(o_m))\right) do. \label{eq:Z_Q}
    \end{align}
    Note that each $o = (o_1, \ldots, o_m) \in O_\nearrow$ has $o_1 \in [x_{i_1}, x_{i_1 + 1}), \ldots, o_m \in [x_{i_m}, x_{i_m + 1})$ for exactly one $s = (i_1, \ldots, i_m) \in S_\nearrow$. Shorthand this by $o \in s$, and let $g_O(s) = \{o \in O_\nearrow \mid o \in s\}$. Then
    \begin{align}
        (\ref{eq:Z_Q}) =&\ \sum_{s \in S_\nearrow} \int_{g_O(s)} \exp\left(\frac{\eps}{2\Delta_{u_Q}} \cdot u_{Q'}(X, (h(o_1), \ldots, h(o_m))\right) do \nonumber \\
        =&\ \sum_{s \in S_\nearrow} \int_{g_O(s)} \exp\left(\frac{\eps}{2\Delta_{u_Q}} \cdot u_{Q'}(X, s)\right) do \nonumber \\
        =&\ \sum_{s \in S_\nearrow} \exp\left(\frac{\eps}{2\Delta_{u_Q}} \cdot u_{Q'}(X, s)\right) \cdot \int_{g_O(s)} do \label{eq:Z_Q_2}.
    \end{align}
    We focus on the $\int_{g_O(s)} do$ term. If $h(o_1), \ldots, h(o_m)$ are all distinct, i.e. $o_1, \ldots, o_m$ come from distinct intervals between data points, then
    $$\int_{g_O(s)} do = \prod_{j=1}^m (x_{i_j + 1} - x_{i_j}).$$
    The remaining (and more complex) case is when $h(o_1), \ldots, h(o_m)$ are not distinct. Suppose $h(o_1), \ldots, h(o_k)$ are not distinct but the remaining $h(o_{k+1}), \ldots, h(o_m)$ are distinct and different from $h(o_1)$. Note that the non-distinct elements are consecutive since $o_1 \leq \cdots \leq o_m$. Then there is some $i \in I$ such that $o_1, \ldots, o_k \in [x_i, x_{i+1})$. Thus the set of valid $o_1, \ldots, o_k$ is exactly $\{(o_1, \ldots, o_k) \mid x_i \leq o_1 \leq \cdots \leq o_k < x_{i+1}\}$.
    
    We need to determine the volume of this set.  First, note that the collection consisting of \emph{all} sets of $k$ values from interval $i$ has volume $(x_{i+1} - x_i)^k$.  Then, note that the probability that $k$ values selected at random from an interval will be perfectly sorted is $1/k!$; this is the volume of the standard $k$-simplex, which is the set $\{(x_1, \ldots, x_k) \mid 0 \leq x_1 \leq \cdots \leq x_k \leq 1\}$.  Hence, for the set that we are interested in, $\{(o_1, \ldots, o_k) \mid x_i \leq o_1 \leq \cdots \leq o_k < x_{i+1}\}$, the volume is $\tfrac{(x_{i+1} - x_i)^k}{k!}$.
    
    More generally, this leads us to define the scaling factor $\gamma$ in \cref{subsec:finite}:
    $$\gamma(s) = \prod_{i \in I} \num{i}{s}!$$
    where $\num{i}{s}$ is the number of times $i$ appears in $s$, and we take $0! = 1$. $\gamma$ thus repeats the above scaling process for each interval according to its number of repetitions in $h(o_1), \ldots, h(o_m)$. It follows that for any $s \in S_\nearrow$,
    $$\int_{g_O(s)} do = \frac{\prod_{j=1}^m (x_{i_j + 1} - x_{i_j})}{\gamma(s)}.$$
    Returning to our original chain of equalities, we get
    \begin{align*}
        (\ref{eq:Z_Q_2}) =&\ \sum_{s \in S_\nearrow} \exp\left(\frac{\eps}{2\Delta_{u_Q}} \cdot u_{Q'}(X, s)\right) \cdot \frac{\prod_{j=1}^m (x_{i_j + 1} - x_{i_j})}{\gamma(s)} \\
        =&\ Z_{Q'}.
    \end{align*}
    Turning to the output density $f_Q$ for $M_Q$, by above
    $$f_Q(o) = \frac{1}{Z_{Q'}} \cdot \exp\left(\frac{\eps}{2\Delta_{u_Q}} \cdot u_Q(X, o)\right).$$
    For any $s \in S_\nearrow$ and any $o, o' \in g_O(s)$ we have $f_Q(o) = f_Q(o')$ and
    \begin{align*}
        \P{M_Q}{M_Q(X) \in g_O(s)} =&\ \frac{1}{Z_{Q'}} \cdot \int_{g_O(s)} \exp\left(\frac{\eps}{2\Delta_{u_Q}} \cdot u_{Q'}(X, (h(o_1), \ldots, h(o_m)))\right) do \\
        =&\ \frac{1}{Z_{Q'}} \cdot \exp\left(\frac{\eps}{2\Delta_{u_Q}} \cdot u_{Q'}(X, s)\right) \cdot \frac{\prod_{j=1}^m (x_{i_j + 1} - x_{i_j})}{\gamma(s)} \\
        =&\ \P{M_{Q'}}{M_{Q'}(X) \in g_O(s)}.
    \end{align*}
    $u_Q$ is constant over $o \in g_O(s)$ for any $s$, so conditioned on selecting a given $s = (i_1, \ldots, i_m) \in S_\nearrow$, $M_Q$ has a uniform output distribution over increasing sequences from $g_O(s)$, i.e.
    \begin{align*}
        f_Q(o) =&\ f_Q(o \mid o \in g_O(s)) \cdot \P{M_Q}{M_Q(X) \in g_O(s)} \\
        =&\ \prod_{j \in [m]} \frac{\gamma(s)}{x_{i_j+1} - x_{i_j}} \cdot \P{M_Q}{M_Q(X) \in g_O(s)} \\
        =&\ \prod_{j \in [m]} \frac{\gamma(s)}{x_{i_j+1} - x_{i_j}} \cdot \P{M_{Q'}}{M_{Q'}(X) \in g_O(s)} \\
        =&\ f_{Q'}(o)
    \end{align*}
    where the second and fourth equalities use $\int_{g_O(s)} do = \tfrac{\prod_{j=1}^m (x_{i_j + 1} - x_{i_j})}{\gamma(s)}$. Thus $M_Q$ and $M_{Q'}$ have identical output distributions.
\end{proof}

We now repeat this process for the efficient sampling improvement from \cref{subsec:fb}.
\Algo*
\begin{proof}
    We first verify that \algo~samples from $M_{Q'}$, which implies differential privacy. Since the uniform sampling step is unchanged, it suffices to show that the distribution over sampled sequences of intervals is correct.
    
    Let $S_\nearrow(j,i,k) = \{(i_1, \dots, i_j) \mid i_1 \leq \cdots \leq i_{j-k} < i_{j-k+1} = \cdots = i_j = i\}$ denote the set of nondecreasing sequences of length $j$ where exactly the last $k$ intervals are equal to $i$. We will first show that, for all $j \in [m]$, all $i \in I$, and all $k \in [j]$,
    \begin{equation}
    \label{eq:alpha}
        \alpha(j,i,k) = \sum_{\substack{s = (i_1, \dots, i_j) \\ s \in S_\nearrow(j,i,k)}} 
                \frac{1}{\gamma(s)} \prod_{j' \in [j]} \phi(i_{j'-1}, i_{j'}, j') \tau(i_{j'}).
    \end{equation}
    The base case of $j=1$ holds by definition, since we let $\alpha(1, i, 1) = \phi(0, i, 1) \tau(i)$ and $\gamma(s) = 1$ when $s$ is a sequence of length one. Before the induction step, we make our optimization for computing $\alpha$ explicit. First, we define
    \[
        \hat\alpha(j-1, \cdot) = \sum_{k < j} \alpha(j-1, \cdot, k),
    \]
    a vector of length $n+1$ that can be computed in time $O(mn)$. Then
    \[
        \alpha(j, i, 1) = \tau(i) \sum_{i' < i} \phi(i', i, j) \sum_{k < j} \alpha(j-1, i', k)
        = \tau(i) \sum_{i' < i} \phi(i', i, j) \hat \alpha(j-1, i').
    \]
    Each $\alpha(j, i, 1)$ sums $O(n)$ terms, so a straightforward computation of $\alpha(j, \cdot, 1)$ in its entirety takes time $O(n^2)$. However, we can improve on this by noticing that, after fixing $j$, each $\phi(i, i', j)$ depends only on $i' - i$. $\phi(\cdot, \cdot, j)$ is therefore a Toeplitz matrix, i.e. a matrix with constant diagonals:
    \begin{equation}
    \label{eq:matrix_phi}
        \phi(\cdot, \cdot, j) =
        \begin{bmatrix}
            \exp\left(-\frac{\eps n_j}{2\Delta_{u_Q}}\right) & \exp\left(-\frac{\eps|1 - n_j|}{2\Delta_{u_Q}}\right) & \exp\left(-\frac{\eps|2 - n_j|}{2\Delta_{u_Q}}\right) & \dots  & \exp\left(-\frac{\eps|n - n_j|}{2\Delta_{u_Q}}\right) \\
            0 & \exp\left(-\frac{\eps n_j}{2\Delta_{u_Q}}\right) & \exp\left(-\frac{\eps|1 - n_j|}{2\Delta_{u_Q}}\right) & \cdots  & \exp\left(-\frac{\eps|n - 1 - n_j|}{2\Delta_{u_Q}}\right) \\
            \vdots & \vdots & \ddots & \cdots & \vdots \\
            0 & 0 & 0 & \cdots & \exp\left(-\frac{\eps|1 - n_j|}{2\Delta_{u_Q}}\right) \\
            0 & 0 & 0 & \cdots  & \exp\left(-\frac{\eps n_j}{2\Delta_{u_Q}}\right)
        \end{bmatrix}.
    \end{equation}
    It follows that we can use the Fast Fourier Transform (FFT) to multiply $\phi(\cdot,\cdot,j)$ by a vector of length $n+1$ in time $O(n\log(n))$ instead of the typical $O(n^2)$ (a brief reference for this fact appears in the following lecture notes~\cite{B19}). Letting $\times$ denote element-wise product, we now rewrite  
    \[
        \alpha(j, \cdot, 1) = \tau(\cdot) \times \left(\phi(\cdot, \cdot, j)_0^T\hat \alpha(j-1, \cdot)^T\right),
    \] 
    where $\phi(\cdot,\cdot,j)$ denotes $\phi$ with the diagonal set to 0,
    and use the FFT to compute the second term in time $O(n\log(n))$, since $\phi(\cdot, \cdot, j)^T$ is also Toeplitz. It therefore takes overall time $O(mn\log(n) + m^2n)$ to repeat this for each $j$ and compute $\alpha(\cdot, \cdot, 1)$.
    
    Returning to the inductive step, suppose \cref{eq:alpha} holds for $j' < j$. Then
    \begin{align*}
        \alpha(j, i, 1) 
        &= \tau(i) (\phi(\cdot, \cdot, j)^T \hat \alpha(j-1, \cdot)^T)_i \\
        &= \sum_{i' < i} \tau(i) \phi(i', i, j) \hat \alpha(j-1, i') \\
        &= \sum_{i' < i} \tau(i) \phi(i', i, j) \sum_{k < j} \alpha(j-1, i', k)\\
        &= \sum_{i' < i} \tau(i) \phi(i', i, j) \sum_{k < j} 
            \sum_{\substack{s' = (i_1, \dots, i_{j-1}) \\ s' \in S_\nearrow(j-1,i',k)}} 
                \frac{1}{\gamma(s')} \prod_{j' \in [j-1]} \phi(i_{j'-1}, i_{j'}, j') \tau(i_{j'}) \\
        &= \sum_{i' < i} \sum_{k < j} 
            \sum_{\substack{s' = (i_1, \dots, i_{j-1}) \\ s' \in S_\nearrow(j-1,i',k)}} 
                \tau(i) \phi(i', i, j) \frac{1}{\gamma(s')} \prod_{j' \in [j-1]} \phi(i_{j'-1}, i_{j'}, j') \tau(i_{j'})\\
        &= \sum_{\substack{s = (i_1, \dots, i_j) \\ s \in S_\nearrow(j,i,1)}} 
            \frac{1}{\gamma(s)} \prod_{j' \in [j]} \phi(i_{j'-1}, i_{j'}, j') \tau(i_{j'}),
    \end{align*}
    since every sequence in $S_\nearrow(j,i,1)$ consists of a sequence in $S_\nearrow(j-1, i', k)$ for some $i' < i$ and $k < j$ with an $i$ appended to the end. Note that the appending of only a single $i$ means that $\gamma(s) = \gamma(s')$. Similarly,
    \begin{align*}
    \alpha(j, i, k > 1) 
    &= \tau(i) \cdot \phi(i, i, j) \cdot \alpha(j-1, i, k-1) / k\\
    &= \frac{\tau(i)}{k} \phi(i, i, j) \sum_{\substack{s' = (i_1, \dots, i_{j-1}) \\ s' \in S_\nearrow(j-1,i,k-1)}} 
                \frac{1}{\gamma(s')} \prod_{j' \in [j-1]} \phi(i_{j'-1}, i_{j'}, j') \tau(i_{j'}) \\
    &= \sum_{\substack{s = (i_1, \dots, i_j) \\ s \in S_\nearrow(j,i,k)}} 
                \frac{1}{\gamma(s)} \prod_{j' \in [j]} \phi(i_{j'-1}, i_{j'}, j') \tau(i_{j'}),
    \end{align*}
    since every sequence in $S_\nearrow(j,i,k>1)$ consists of a sequence in $S_\nearrow(j-1, i, k-1)$ with an $i$ appended to the end and, when $i$ appears $k-1$ times in sequence $s'$ and $s$ is equal to $s'$ with an $i$ appended to the end, $\gamma(s) = k \gamma(s')$. Thus \cref{eq:alpha} holds, and we have the ``forward'' step: $\alpha(j,i,k)$ is the (unnormalized) mass of nondecreasing length-$j$ sequences ending in $k$ repetitions of $i$.
    
    Now consider the backward sampling process in \algo. In the first step, we sample a pair 
    \begin{align*}
        (i,k) &\propto \alpha(m, i, k) \phi(i, n, m+1) \\
        &= \left[\sum_{\substack{s = (i_1, \dots, i_m) \\ s \in S_\nearrow(m,i,k)}} 
                \frac{1}{\gamma(s)} \prod_{j' \in [m]} \phi(i_{j'-1}, i_{j'}, j') \tau(i_{j'})\right]\phi(i, n, m+1)\\
        &= \sum_{\substack{s = (i_1, \dots, i_m) \\ s \in S_\nearrow(m,i,k)}} 
                \frac{1}{\gamma(s)} \prod_{j' \in [m+1]} \phi(i_{j'-1}, i_{j'}, j') \prod_{j' \in [m]} \tau(i_{j'})\\
        &\propto \sum_{s \in S_\nearrow(m,i,k)} \P{M_{Q'}}{s}~,
    \end{align*}
    where the second equality uses the fact that we fix $i_{m+1} = n$. Since $\{S_\nearrow(m,i,k)\}_{i,k}$ is a partition of $S_\nearrow$ (that is, every sequence in $S_\nearrow$ appears in $S_\nearrow(m,i,k)$ for exactly one value of the pair $(i,k)$), we conclude that $(i,k)$ is sampled according to the marginal probability that a sequence sampled from $\Po{M_{Q'}}$ ends in exactly $k$ copies of $i$.
    
    Continuing the backward recursion, if the values of $s_\text{suffix} = (i_{j+1}, \dots, i_m)$ have already been sampled, then at the next step we sample a pair
    \begin{align*}
        (i < i_{j+1},k) &\propto \alpha(j, i, k) \phi(i, i_{j+1}, j+1) \\
        &= \left[\sum_{\substack{s_\text{prefix} = (i_1, \dots, i_j) \\ s_\text{prefix} \in S_\nearrow(j,i,k)}} 
                \frac{1}{\gamma(s_\text{prefix})} \prod_{j' \in [j]} \phi(i_{j'-1}, i_{j'}, j') \tau(i_{j'}) \right]\phi(i, i_{j+1}, j+1) \\
        &\propto \left[\sum_{\substack{s_\text{prefix} = (i_1, \dots, i_j) \\ s_\text{prefix} \in S_\nearrow(j,i,k)}} 
                \frac{1}{\gamma(s_\text{prefix})} \prod_{j' \in [j]} \phi(i_{j'-1}, i_{j'}, j') \tau(i_{j'}) \right] \phi(i, i_{j+1}, j+1) \\
                &\cdot \left[\frac{1}{\gamma(s_\text{suffix})} \prod_{j' = j+2}^{m+1} \phi(i_{j'-1}, i_{j'}, j') \prod_{j'=j+1}^m \tau(i_{j'})\right]\\
        &= \sum_{\substack{s_\text{prefix} = (i_1, \dots, i_j) \\ s_\text{prefix} \in S_\nearrow(j,i,k)}} 
                \frac{1}{\gamma(s_\text{prefix})\gamma(s_\text{suffix})} \prod_{j' \in [m+1]} \phi(i_{j'-1}, i_{j'}, j') \prod_{j' \in [m]} \tau(i_{j'}) \\
        &\propto \sum_{\substack{s_\text{prefix} = (i_1, \dots, i_j) \\ s_\text{prefix} \in S_\nearrow(j,i,k)}} \P{M_{Q'}}{s_\text{prefix} + s_\text{suffix}},
    \end{align*}
    where $+$ denotes sequence concatenation, and we use the fact that, since $s_\text{prefix}$ and $s_\text{suffix}$ are nondecreasing and $i_j < i_{j+1}$, $\gamma(s_\text{prefix} + s_\text{suffix}) = \gamma(s_\text{prefix})\gamma(s_\text{suffix})$.
    Again, the set of nondecreasing sequences of length $j$ can be partitioned into disjoint subsets $\{S_\nearrow(j,i,k)\}_{i,k}$, thus the pair $(i,k)$ is sampled according to the marginal probability that a sequence sampled from $\Po{M_{Q'}}$, conditional on having the suffix $s_\text{suffix}$, has a $j$-length prefix ending in exactly $k$ copies of $i$.
    
    Inductively, then, \algo~samples a sequence $s$ according to $\Po{M_{Q'}}$. It remains to show that \cref{alg:algo} satisfies the claimed time and space guarantees.
    
    \paragraph{Time analysis.} The first for-loop in \cref{alg:algo} computes $\alpha(1, \cdot, 1)$ in total time $O(n)$. Each iteration of the second for-loop, over $j = 2, \ldots, m$, computes $\hat \alpha(j-1, \cdot)$ in time $O(mn)$, computes $\alpha(j, \cdot, 1)$ in time $O(n\log(n))$ using FFT multiplication, and finally spends $O(m)$ time setting $\alpha(j, i, \cdot)$. The second for-loop thus takes total time $O(m^2n + mn\log(n))$. Having computed $\alpha$, each sampling of $(i, k)$ takes time $O(mn)$, so the final sampling takes time $O(m^2n)$. Summing up, the total time is $O(mn\log(n) + m^2n)$. 
    
    \paragraph{Space analysis.} $\hat{\alpha}$ takes $O(mn)$ space, the FFT relying on the Toeplitz expression of $\phi(\cdot, \cdot, j)$ takes space $O(n)$, and $\alpha$ takes $O(m^2n)$ space. All other variables in the algorithm occupy a constant amount of space, so the overall space usage is $O(m^2n)$.
\end{proof}

\section{Logarithm Trick}
\label{subsec:log_trick}
In this section, we give details for a more numerically stable logarithmic version of \algo. Recall that we defined $\alpha(1, i, 1) = \phi(0, i, 1)\tau(1)$ and $\alpha(1, i, k) = 0$ for $k > 1$. The former becomes $\ln(\alpha(1, i, 1)) = \ln(\phi(0, i, 1)) + \ln(\tau(1))$ and the latter $\ln(\alpha(1, i, k)) = -\infty$, e.g. using \texttt{-numpy.inf} in Python.

We now turn to $\ln(\alpha(j, \cdot, \cdot))$ for $j = 2, \ldots, m$. To set
\[
    \ln(\hat \alpha(j-1, i)) = \ln\left(\sum_{k < j} \alpha(j-1, i, k)\right)
\]
$\ln(\alpha)$ terms that have already been computed, we use the following method for summing a vector of quantities $a$ given its component-wise logarithmic form $\ln(a)$
\begin{enumerate}
    \item Compute the maximum element in the vector: $M_a = \max(\ln(a))$.
    \item Component-wise subtract off the maximum element and exponentiate: $a = \exp(\ln(a) - M_a)$.
    \item Sum outside of logspace, then return to logspace: $c = \ln(\text{sum}(a))$.
    \item Add back the maximum: return $c + M_a$.
\end{enumerate}
An example implementation is \texttt{scipy.special.logsumexp}~\cite{Sc20}.

In the computation of different $\ln(\alpha(j, i, 1))$ using
\[
    \alpha(j, \cdot, 1) = \tau(\cdot) \times \left(\phi(\cdot, \cdot, j)^T\hat \alpha(j-1, \cdot)\right)
\] 
we want to multiply a Toeplitz matrix $A$ and vector $B$ given their component-wise logarithmic forms $\ln(A)$ and $\ln(B)$ by a similar process. Since $A$ is Toeplitz, we only need to work with its first column $\ln(A_c)$ and first row $\ln(A_r)$. Then we:
\begin{enumerate}
    \item Compute the maximum element in $A_c$ and $A_r$ and the maximum element in $\ln(B)$: $M_A = \max(\max(\ln(A_c)), \max(\ln(A_r)))$ and $M_B = \max(\ln(B))$.
    \item Component-wise subtract off the maximum element and exponentiate: $A_c = \exp(\ln(A_c) - M_A)$, $A_r = \exp(\ln(A_r) - M_A)$ and $B = \exp(\ln(B) - M_B)$.
    \item Do the FFT matrix-vector multiplication outside of logspace, then return to logspace: $C = \ln(A \times B)$.
    \item Add back the maxima: return $C + M_A + M_B$.
\end{enumerate}
An example implementation for non-FFT matrix multiplication can be found on StackOverflow~\cite{SO14}.

\section{Sampling by ``Racing'' Method}
\label{subsec:racing}
The ``racing'' method is originally due to Ilya Mironov. To the best of our knowledge, full exposition and proofs first appeared in the work of~\citet{MG20}. We recap their exposition here. The main tool is the following result:
\begin{lemma}[Proposition 5~\cite{MG20}]
\label{lem:racing}
    Let $U_1, \ldots, U_N \sim U(0,1)$ be uniform random samples from $[0,1]$ and define random variable $R = \arg \min_{[N]} \left[\ln(\ln(1/U_k)) - \ln(p_k)\right]$. Then $\P{R}{k} = \frac{p_k}{\sum_{j=1}^N p_j}$.
\end{lemma}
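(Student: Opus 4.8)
The plan is to reparametrize so that the statement reduces to the classical ``exponential race'' (equivalently, Gumbel-max) identity.

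First I would change variables from the uniforms to exponentials. If $U \sim U(0,1)$, then $E := \ln(1/U) = -\ln U$ satisfies $\Pr[E > t] = \Pr[U < e^{-t}] = e^{-t}$ for $t \ge 0$, so $E$ is a standard exponential. Hence, writing $E_k := \ln(1/U_k)$, the variables $E_1, \dots, E_N$ are i.i.d.\ $\mathrm{Exp}(1)$. (Indices with $p_k = 0$ can be set aside: there $\ln p_k = -\infty$, the objective is $+\infty$, the index is never chosen, and $p_k / \sum_j p_j = 0$; so assume $p_k > 0$ for all $k$, and note $U_k \in (0,1)$ almost surely.) Second, I would use monotonicity of $\ln$ to simplify the argmin: for any realization with all $E_k \in (0,\infty)$ (which holds almost surely),
\[
    \ln(\ln(1/U_k)) - \ln(p_k) = \ln\!\left(\frac{E_k}{p_k}\right),
\]
so, since $t \mapsto \ln t$ is increasing, $R = \arg\min_{k \in [N]} E_k/p_k$. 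Put $Y_k := E_k/p_k$; scaling a standard exponential by $1/p_k$ gives $Y_k \sim \mathrm{Exp}(p_k)$, and the $Y_k$ remain independent.

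Third, I would invoke the exponential race computation. Because the $Y_j$ are continuous, ties have probability $0$, so $\Pr[R = k] = \Pr[\,Y_k < Y_j \text{ for all } j \neq k\,]$. Writing $W_k := \min_{j \neq k} Y_j$, independence gives $\Pr[W_k > t] = \prod_{j \neq k} e^{-p_j t} = e^{-(\sum_{j \neq k} p_j) t}$, i.e.\ $W_k \sim \mathrm{Exp}(\sum_{j \neq k} p_j)$ and $W_k$ is independent of $Y_k$. Conditioning on $Y_k$,
\[
    \Pr[Y_k < W_k] = \int_0^\infty p_k e^{-p_k t}\, e^{-(\sum_{j \neq k} p_j) t}\, dt = \frac{p_k}{\sum_{j=1}^N p_j},
\]
which is the claim.

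Every step is routine, so there is no real obstacle. The only points needing a word of care are the measure-zero edge cases ($U_k \in \{0,1\}$, $p_k = 0$, ties) and the elementary fact that a minimum of independent exponentials is exponential with the summed rate. The conceptual content worth flagging in the writeup is simply that the ``$\ln\ln(1/U_k) - \ln p_k$'' objective is precisely the Gumbel-max reparametrization of sampling an index with probability proportional to $p_k$.
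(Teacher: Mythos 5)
Your proof is correct. Note that the paper does not actually prove this lemma --- it is imported verbatim as Proposition~5 of the cited work \cite{MG20} (the ``racing'' trick attributed to Mironov) --- so there is no in-paper argument to compare against; your reduction to the exponential race (map $U_k \mapsto E_k = \ln(1/U_k) \sim \mathrm{Exp}(1)$, note the objective is $\ln(E_k/p_k)$ so the argmin is that of the independent $\mathrm{Exp}(p_k)$ variables $E_k/p_k$, then compute $\Pr[Y_k < \min_{j\neq k} Y_j] = p_k/\sum_j p_j$) is the standard proof of this fact, and your handling of the $p_k=0$ and tie/boundary edge cases is adequate.
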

\cref{lem:racing} enables us to sample from distributions that depend on small probabilities $p_k$ by instead using their logarithms. In combination with the logarithm trick from \cref{subsec:log_trick}, we avoid dealing with exponentiated terms entirely.

\section{Discussion of Other Quantile Algorithms}
\label{sec:bad_algos}
In this section, we discuss the private quantile estimation algorithms of \citet{DL09} and \narxiv{\citet{TVZ20}}\arxiv{Tzamos, Vlatakis-Gkaragkounis, and Zadik~\cite{TVZ20}} and explain why we do not include them in our experiments.  Both of these are single quantile algorithms and would require $m$ compositions in order to estimate $m$ quantiles.

\citet{DL09} define a ``propose-test-release'' algorithm. Briefly, it discretizes the space into bins of equal width, then computes how many points in the dataset must change to move the true quantile out of its current bin. If this number is too small (specifically, if it is no larger than $\ln^2(n) + 2$, the ``test''), then the algorithm does not produce an answer. Otherwise, the answer is the true quantile plus Laplace noise whose scale is six times the bin width (``release'').

We can ballpark the accuracy of this method on the uniform data distribution used in our experiments, i.e. $n=1000$ samples from $U(-5,5)$. Then $\ln^2(n) + 2 \approx 50$. If we choose a bin width such that the bin with the true median contains 100 points, then it takes at most 50 swaps to move the median out of that bin. We must therefore choose, at a minimum, a bin size such that the bin containing the median contains at least 100 points. Even if we successfully make this choice, then the resulting output will \emph{still} be far less accurate than that of all the other methods tested in the experiments. This is because a successful choice requires a bin width $\geq 1$, so the algorithm releases the true median value of $\approx 0$ plus Laplace noise with scale 6. With that scaling, the estimated median is at one of the limits of the $[-5, 5]$ range with probability $\approx 0.434$. This means that the estimated median misclassifies roughly $500$ out of the $1000$ points over 40\% of the time, making its expected error in excess of $200$ points. For comparison, the algorithms that we test only require lower and upper bounds on the data (not knowledge of the distribution sufficient to choose a good bin width), always output an estimate, and produce average error $\leq 25$ for median estimation on uniform data.

We now turn to the private quantile estimation algorithm given by \citet{TVZ20}. This algorithm is also based on adding (a variant of) Laplace noise to the true median. The first drawback of this method is that its time complexity is $O(n^4)$ (see the footnote accompanying their definition of ``TypicalHamming''). This makes it impractical for datasets with more than a few hundred datapoints. The second drawback is the need to select several hyperparameters ($R, r, L, C$) to determine the specific Laplace noise distribution. While this hyperparameter selection does not affect the privacy guarantee, it does affect the utility. Their utility guarantees assume that the algorithm operator knows these distributional parameters a priori, but this assumption may be hard to satisfy in practice. In contrast, \algo~and \apq~only require the user to provide endpoints.

\section{Details For Comparison Algorithms}
\label{sec:app_comparisons}
\paragraph{\apq:} Privacy parameters for the $m$ invocations of the exponential mechanism come from the composition guarantee given by\arxiv{Dong, Durfee, and Rogers~\cite{DDR20}} \narxiv{\citet{DDR20}}. For simplicity, we give a less general (but not weaker) version of their result.
\begin{lemma}[Theorem 3~\cite{DDR20}]
\label{lem:apq_composition}
    Let mechanism $\A$ consist of $m$ nonadaptive $\eps$-DP applications of the exponential mechanism. Define
    $$t_\ell^* = \left[\frac{\eps_g + (\ell+1)\eps}{m+1}\right]_0^\eps \text{ and } p_{t_\ell^*} = \frac{e^{-t_\ell^*} - e^{-\eps}}{1 - e^{-\eps}}$$
    where $\left[x\right]_0^\eps$ denotes the value of $x$ clipped to interval $[0,\eps]$. Then $\A$ is $(\eps_g, \delta)$-DP for
    $$\delta = \max_{0 \leq l \leq m} \sum_{i=0}^m \left[\binom{m}{i} p_{t_\ell^*}^{m-i} \cdot (1 - p_{t_\ell^*})^i \cdot \max\left(e^{mt_\ell^* - i\eps} - e^{\eps_g}, 0\right)\right].$$
\end{lemma}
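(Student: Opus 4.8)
The plan is to follow the argument of Dong, Durfee, and Rogers~\cite{DDR20}, of which this lemma is a specialized, nonadaptive restatement. The crucial first observation is that the exponential mechanism is not merely $\eps$-DP but \emph{$\eps$-range-bounded}: for neighboring $X \sim X'$ the privacy loss $\ell(o) = \ln\frac{\P{M}{M(X)=o}}{\P{M}{M(X')=o}}$ equals $\frac{\eps}{2\Delta_u}\big(u(X,o)-u(X',o)\big)$ plus a term independent of $o$ (the log-ratio of the two normalizers), and since $u(X,o)-u(X',o) \in [-\Delta_u,\Delta_u]$ the loss ranges over an interval of width at most $\eps$, i.e. $\sup_o \ell(o) - \inf_o \ell(o) \le \eps$. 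Each of the $m$ applications is therefore $\eps$-range-bounded, and because they are nonadaptive it suffices to upper bound the optimal $(\eps_g,\delta)$ tradeoff --- equivalently the hockey-stick divergence $\sup_S\big(P(S) - e^{\eps_g}Q(S)\big)$ between the joint output laws $P,Q$ on a pair of neighbors --- over all $m$-tuples of $\eps$-range-bounded mechanisms.

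Second, I would invoke the extremal characterization from \cite{DDR20}: the set of privacy-loss distributions realizable by an $\eps$-range-bounded mechanism has as its extreme points the two-point ``randomized-response-type'' mechanisms $B_t$, $t \in [0,\eps]$, for which the privacy loss takes only the values $t$ and $t-\eps$, and the $m$-fold composition maximizing the hockey-stick divergence at level $\eps_g$ is $m$ independent copies of a single such $B_t$. Here the mass $p_t = \frac{e^{-t}-e^{-\eps}}{1-e^{-\eps}}$ on the value-$t$ outcome (measured under $Q$) is forced by the requirement that $\ell$ be a valid privacy loss, i.e. $E_Q[e^{\ell}] = 1$; this is the quantity $p_{t_\ell^*}$ appearing in the statement.

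Third, I would compute the divergence of $B_t^{\otimes m}$ directly. Indexing the $2^m$ joint outcomes by $i$, the number of coordinates realizing the value-$(t-\eps)$ outcome, the composed privacy loss of that class is $m t - i\eps$, its $Q$-mass is $\binom{m}{i} p_t^{m-i}(1-p_t)^i$, and hence its $P$-mass is $\binom{m}{i} p_t^{m-i}(1-p_t)^i e^{mt-i\eps}$; the hockey-stick divergence at level $\eps_g$ is thus $\sum_{i=0}^m \binom{m}{i} p_t^{m-i}(1-p_t)^i \max\!\big(e^{mt-i\eps}-e^{\eps_g},\,0\big)$, matching the summand in the statement with $t = t_\ell^*$. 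It then remains to maximize this over $t \in [0,\eps]$; \cite{DDR20} show the supremum is attained at one of the $m+1$ explicit candidates $t_\ell^* = \big[\tfrac{\eps_g + (\ell+1)\eps}{m+1}\big]_0^\eps$, $\ell = 0,\dots,m$, obtained by noting that the objective is piecewise smooth in $t$ (each term switches on or off as $mt-i\eps$ crosses $\eps_g$) and locating its stationary points, so that $\delta = \max_{0\le\ell\le m}(\cdot)$ as claimed.

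I expect the main obstacle to be the extremal characterization in the second step: identifying the two-point mechanisms $B_t$ as the worst case, and reducing a heterogeneous $m$-tuple of range-bounded mechanisms to $m$ identical copies, is the technical core of \cite{DDR20} and requires a convexity/majorization argument on the space of privacy-loss distributions. The breakpoint optimization in the last step is a secondary but still fiddly calculation, while the range-boundedness computation and the binomial evaluation of $B_t^{\otimes m}$ are routine.
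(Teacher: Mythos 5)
The paper does not prove this statement at all: it is imported verbatim (in a specialized, nonadaptive form) from Dong, Durfee, and Rogers~\cite{DDR20}, so there is no internal proof to compare against. Judged on its own terms, your outline is an accurate reconstruction of how the cited proof goes, and it gets the one point that actually matters for correctness: the formula is \emph{not} a consequence of the mechanisms being $\eps$-DP alone (generic optimal DP composition gives a different bound), but of the exponential mechanism being $\eps$-bounded-range, i.e.\ $\sup_o \ell(o)-\inf_o \ell(o)\le\eps$, which you derive correctly from the structure of the privacy loss. Your normalization $p_t=\tfrac{e^{-t}-e^{-\eps}}{1-e^{-\eps}}$ via $\E{Q}{e^{\ell}}=1$, the binomial evaluation of the hockey-stick divergence for $B_t^{\otimes m}$, and the final maximization over the clipped candidates $t_\ell^*$ all match the source. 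The caveat is that the two genuinely hard steps --- the extremal/majorization argument reducing an arbitrary $m$-tuple of $\eps$-bounded-range mechanisms to i.i.d.\ two-point mechanisms $B_t$, and the argument that the optimum over $t\in[0,\eps]$ is attained at one of the $m+1$ listed breakpoints --- are invoked from \cite{DDR20} rather than proved, as you acknowledge. That is exactly what the paper itself does by citation, so your proposal is consistent with the paper's treatment; as a self-contained proof it would still owe those two arguments.
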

To apply Lemma~\ref{lem:apq_composition} with a fixed $\delta$, we use it to compute the largest $\eps$, at a granularity of 0.01, that achieves $(\eps_g,\delta)$-DP with some $\delta \leq 10^{-6}$, and we use this value for our experiments. As this is independent of the actual mechanism in question, the time required for this computation is not included in the runtime values reported for \apq.

\paragraph{\csmooth:} We start with the precise statement of the $t$-smooth sensitivity of computing a quantile:
\begin{lemma}[Proposition 3.4~\cite{NRS07}]
\label{lem:smooth_q}
    Let $a$ and $b$ be client-provided left and right data endpoints. Let $X$ be a database of values $x_1 \leq \ldots \leq x_n$ in $[a,b]$, and for notational convenience define $x_i = a$ for $i < 1$ and $x_i = b$ for $i > n$. Let $x_{j^*}$ be the true value for quantile $q$ on $X$. Then the $t$-smooth sensitivity of computing $q$ on $X$ is
    $$\S_q^t(X) = \max_{m=0, \ldots, n} \left(e^{-tm} \cdot \max_{k = 0, \ldots, m+1} (x_{j^*+k} - x_{j^*+k-m-1})\right).$$ 
\end{lemma}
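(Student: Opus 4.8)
The plan is to reduce the smooth-sensitivity computation to computing, for each distance $s$, the quantity $A^{(s)}(X) := \max_{X' : d(X,X') \le s} \Delta_q(X')$ (the largest local sensitivity reachable within $s$ swaps), and then to show $A^{(s)}(X) = \max_{k = 0,\ldots,s+1}(x_{j^*+k} - x_{j^*+k-s-1})$. The first reduction is generic: grouping the maximum in $\S_q^t(X) = \max_{X'} e^{-t\,d(X,X')}\Delta_q(X')$ by the value $s = d(X,X')$ gives $\S_q^t(X) = \max_{s \ge 0} e^{-ts} A^{(s)}(X)$ (the ``$\le s$'' and ``$= s$'' versions of $A^{(s)}$ yield the same outer maximum because $A^{(s)}$ is nondecreasing in $s$ while $e^{-ts}$ is decreasing). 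Substituting the formula for $A^{(s)}$ and renaming $s$ to $m$ is then exactly the claimed identity, so the whole proof rests on the $A^{(s)}$ formula.

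I would first record the local sensitivity of the quantile on an arbitrary sorted database $Y = (y_1 \le \cdots \le y_n)$: since a single swap shifts each order statistic by at most one rank, and the quantile value can be forced to $y_{j^*-1}$ (delete a point at index above $j^*$, insert one below) or to $y_{j^*+1}$ (delete below, insert above), we get $\Delta_q(Y) = \max(y_{j^*} - y_{j^*-1},\ y_{j^*+1} - y_{j^*})$, using the clamping convention $y_i = a$ for $i < 1$ and $y_i = b$ for $i > n$. For the lower bound $A^{(s)}(X) \ge x_{j^*+k} - x_{j^*+k-s-1}$ for each feasible $k$, I would exhibit an explicit neighbor at distance $\le s$: change exactly the $s$ points at positions $j^*+k-s,\ldots,j^*+k-1$, clamping some of them to $a$ and the rest to $b$, with the split chosen so that the unchanged points $x_{j^*+k-s-1}$ and $x_{j^*+k}$ become consecutive order statistics straddling the quantile index (for $k = s+1$ all $s$ changed points go to $b$; for smaller $k$ one sends $s-k$ of them to $a$; the count is always between $0$ and $s$). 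Then $\Delta_q(X') = x_{j^*+k} - x_{j^*+k-s-1}$.

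For the upper bound, take any $X'$ with $d(X,X') = s' \le s$ and bound $x'_{j^*+1} - x'_{j^*}$ (the $x'_{j^*} - x'_{j^*-1}$ term being symmetric). Let $\sigma$ be the number of changed entries of $X'$ sitting in positions $1,\ldots,j^*$ of sorted $X'$. Among the bottom $j^*$ entries, $j^*-\sigma$ are unchanged and are the $j^*-\sigma$ smallest unchanged values, so $x'_{j^*} \ge x_{j^*-\sigma}$ (deleting points only raises order statistics). Dually, exactly $s'-\sigma$ of the top $n-j^*$ entries are changed, so the smallest unchanged entry above position $j^*$ is the $(j^*-\sigma+1)$-th smallest unchanged value, which is at most $x_{j^*-\sigma+1+s'}$ (deleting $s'$ points raises an order statistic by at most $s'$); this bounds $x'_{j^*+1}$ from above, and the degenerate case in which the whole tail above position $j^*$ is replaced (so $x'_{j^*+1}$ is only bounded by $b$) is automatically absorbed since then $j^*-\sigma+1+s' = n+1$ and $x_{n+1} = b$. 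Hence $x'_{j^*+1} - x'_{j^*} \le x_{j^*-\sigma+1+s'} - x_{j^*-\sigma} \le x_{j^*+k} - x_{j^*+k-s-1}$ with $k = s+1-\sigma \in \{1,\ldots,s+1\}$, using $s' \le s$ and monotonicity of $x_1 \le \cdots \le x_n$; the symmetric bound contributes $k \in \{0,\ldots,s\}$, and together they give $A^{(s)}(X) \le \max_{k=0,\ldots,s+1}(x_{j^*+k} - x_{j^*+k-s-1})$.

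The main obstacle is this upper bound: one must see that the split of the $s'$ changed entries into ``below position $j^*$'' and ``above position $j^*$'' forces $x'_{j^*}$ and $x'_{j^*+1}$ into windows of $X$ that are only $s+1$ indices apart (rather than the naive $2s+1$ from moving each order statistic independently), and one must check that the boundary cases where an entire tail of $X'$ is overwritten collapse gracefully onto the $x_i = a$ / $x_i = b$ clamping convention already built into the statement. The generic reduction to $A^{(s)}$, the one-swap local sensitivity formula, and the explicit lower-bound construction are all routine.
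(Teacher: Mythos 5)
Your proposal is correct. Note that the paper itself gives no proof of this lemma---it is imported verbatim as Proposition 3.4 of \cite{NRS07}---so there is no in-paper argument to compare against; what you have written is essentially the standard NRS07 proof: reduce smooth sensitivity to the sensitivity-at-distance quantities $A^{(s)}$, use the fact that the local sensitivity of a rank-$j^*$ order statistic is $\max(x_{j^*}-x_{j^*-1},\,x_{j^*+1}-x_{j^*})$, exhibit the extremal neighbor (clamp a block of $s$ consecutive points to the endpoints so that $x_{j^*+k-s-1}$ and $x_{j^*+k}$ become adjacent ranks at the quantile index), and prove the matching upper bound via the order-statistic inequalities $x_r \leq u_{(r)} \leq x_{r+s'}$ after $s'$ deletions, with the parameter $\sigma$ splitting the changed entries across the rank-$j^*$ boundary. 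The only place you should be a bit more explicit is the lower-bound construction when the block of indices $j^*+k-s,\ldots,j^*+k-1$ partially falls outside $\{1,\ldots,n\}$: there you change fewer than $s$ points and the convention values $x_{i}=a$ (for $i<1$) and $x_i=b$ (for $i>n$) supply the missing entries, exactly as you already observed happens automatically in the degenerate case of the upper bound; spelling this out closes the one small gap in an otherwise complete argument.
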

Looking at the two $\max$ operations, we can compute $\S_q^t(X)$ in time $O(n^2)$. \citet{NRS07} also provide a slightly more involved method for computing $\S_q^t(X)$ in time $O(n\log(n))$. We omit its details here but note that our implementation uses this $O(n\log(n))$ speedup for the fairest time comparison. Next, we specify the exact noise distribution used to generate additive noise in \csmooth:
\begin{lemma}[Proposition 3~\cite{BS19}]
\label{lem:concentrated}
    Define the Laplace Log-Normal distribution with shape parameter $\sigma > 0$, $\lln{\sigma}$, as the distribution for the random variable $Z = X \cdot e^{\sigma Y}$ where $X \sim \lap{1}$ and $Y \sim N(0,1)$. Let $f$ be a real-valued function and let $s, t > 0$. Then releasing
    $$f(X) + \frac{\S_f^t(X) \cdot Z}{s}$$
    satisfies $\tfrac{\eps^2}{2}$-CDP for $\eps = \tfrac{t}{\sigma} + e^{1.5\sigma^2}s$.
\end{lemma}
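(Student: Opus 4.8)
This lemma is Proposition 3 of Bun and Steinke~\cite{BS19}; the plan is to reprove it within the Rényi-divergence (zCDP) framework, which is the natural language for ``CDP'' here. Recall the characterization: an algorithm $M$ is $\rho$-CDP if and only if, for every pair of neighboring databases $X \sim X'$ and every order $\lambda \in (1,\infty)$, the Rényi divergence of order $\lambda$ satisfies $D_\lambda(M(X)\|M(X')) \leq \rho\lambda$. So it suffices to fix neighbors $X \sim X'$ and an order $\lambda$, and show that the output distributions of the released quantity on $X$ and $X'$ have Rényi-$\lambda$ divergence at most $\tfrac{\lambda\eps^2}{2}$ with $\eps = \tfrac{t}{\sigma} + e^{1.5\sigma^2}s$.

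\textbf{Reduction to a canonical divergence.} Write $S = \S_f^t(X)$ and $S' = \S_f^t(X')$ (if $S = 0$ then $f$ is constant on the neighborhood of $X$ and both distributions coincide, so assume $S > 0$). Two defining properties of $t$-smooth sensitivity drive the argument: (i) $\S_f^t$ pointwise upper bounds local sensitivity, so $|f(X) - f(X')| \leq \Delta_f(X) \leq S$; and (ii) $\S_f^t$ is $e^t$-smooth, so $e^{-t} \leq S'/S \leq e^t$. Let $W \sim \lln{\sigma}$, so $M(X) = f(X) + \tfrac{S}{s}W$ and $M(X') = f(X') + \tfrac{S'}{s}W$. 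Since Rényi divergence is invariant under a common affine bijection applied to both arguments, applying $z \mapsto \tfrac{s}{S}(z - f(X))$ gives
\[
 D_\lambda(M(X)\|M(X')) = D_\lambda\!\left(W \;\big\|\; a + cW\right),
\]
where $a = \tfrac{s(f(X')-f(X))}{S}$ satisfies $|a|\leq s$ by (i) and $c = S'/S$ satisfies $c \in [e^{-t},e^t]$ by (ii). Because the neighbor relation is symmetric, we must also bound the same quantity with $X,X'$ swapped, i.e. with $c$ possibly below $1$; the argument below is agnostic to the sign of $\ln c$, so this costs nothing.

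\textbf{Separating the scale and shift perturbations.} Represent $W = V e^{\sigma Y}$ with $V \sim \lap{1}$ and $Y \sim \N(0,1)$ independent, and write $g(v,y) = v e^{\sigma y}$ so that $W = g(V,Y)$. Then
\[
 a + cW \;=\; a + V c\, e^{\sigma Y} \;=\; \left(V + a\,e^{-\sigma Y''}\right) e^{\sigma Y''} \;=\; g\!\left(V'',\,Y''\right),
\]
with $Y'' = Y + \tfrac{\ln c}{\sigma}$ and $V'' = V + a\,e^{-\sigma Y''}$. Thus $a+cW$ is the \emph{same} map $g$ applied to a perturbed pair: $Y$ undergoes a Gaussian mean shift of magnitude $|\ln c|/\sigma \leq t/\sigma$, and, conditioned on $Y''$, $V$ undergoes a Laplace location shift of magnitude $|a|\,e^{-\sigma Y''} \leq s\,e^{-\sigma Y''}$. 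By the data-processing inequality for Rényi divergence, $D_\lambda(W\|a+cW) \leq D_\lambda\big((V,Y)\,\|\,(V'',Y'')\big)$, and I would then bound the latter by handling the two coordinates in turn: the $Y$-marginals contribute at most $\tfrac{\lambda}{2}(t/\sigma)^2$ (Rényi divergence of unit-variance Gaussians with means at distance $\leq t/\sigma$), while the conditional Laplace location shift, integrated against the order-$\lambda$ Rényi weight over $Y''$, contributes the term governed by $s$ — the worst-case exponentially tilted expectation of $e^{-\sigma Y''}$ appearing in that integral is precisely what produces the factor $e^{1.5\sigma^2}$. Combining the per-coordinate contributions (subadditively, in the sense that a shift of total ``effective magnitude'' $\tfrac{t}{\sigma} + e^{1.5\sigma^2}s$ incurs Rényi cost $\tfrac{\lambda}{2}(\tfrac{t}{\sigma}+e^{1.5\sigma^2}s)^2$) yields $D_\lambda(W\|a+cW) \leq \tfrac{\lambda\eps^2}{2}$, giving the claimed $\tfrac{\eps^2}{2}$-CDP.

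\textbf{Main obstacle.} Rényi divergence has no exact chain rule, so the scale perturbation (on $Y$) and the shift perturbation (on $V$) do not simply add: they interact, because the conditional Laplace shift $a\,e^{-\sigma Y''}$ depends on the already-perturbed coordinate $Y''$. The crux is therefore a careful conditional estimate of $\int h(w)^\lambda h'(w)^{1-\lambda}\,dw$ for the Laplace Log-Normal densities $h,h'$, controlling the random Laplace scale $e^{\sigma Y''}$ — and in particular extracting the exact constant $1.5$ in the exponent (rather than a looser $\tfrac12$ from the bare moment $\mathbb{E}[e^{-\sigma Y}] = e^{\sigma^2/2}$) from the exponentially tilted Gaussian moment that the order-$\lambda$ integral forces. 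A secondary bookkeeping point is that the final bound must hold uniformly over all $\lambda > 1$ and in both divergence directions, which follows because the treatment of $\ln c$ is sign-agnostic and $|a|\le s$ regardless of the direction.
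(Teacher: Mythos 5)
First, a point of comparison: the paper does not prove this lemma at all --- it is quoted as Proposition~3 of Bun and Steinke~\cite{BS19} and used as a black box to set up the \csmooth~baseline --- so there is no in-paper proof to measure your attempt against. Judged on its own, your setup is sound as far as it goes: the reduction to $D_\lambda(W \,\|\, a + cW)$ with $|a| \le s$ and $\ln c \in [-t,t]$ correctly uses the two defining properties of $t$-smooth sensitivity, and the affine-invariance step is fine. The problem comes immediately after.

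The data-processing bound $D_\lambda(W \,\|\, a+cW) \le D_\lambda\bigl((V,Y)\,\|\,(V'',Y'')\bigr)$ is vacuous, because the right-hand side is $+\infty$ whenever $a \neq 0$. Conditioned on $Y'' = y$, the $V$-coordinate of the second argument is a unit Laplace shifted by $a e^{-\sigma y}$, and the order-$\lambda$ R\'enyi integral between two unit-scale Laplace densities at distance $\mu$ is at least $\tfrac12 e^{(\lambda-1)\mu}$ (integrate over the half-line where both absolute values open with the same sign). The outer integral therefore dominates a Gaussian average of $e^{(\lambda-1)|a|e^{-\sigma y}}$, which diverges for every $\lambda>1$ and every $\sigma>0$ because the log-normal variable $e^{-\sigma Y''}$ has no finite exponential moments. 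So your lift from the scalar $W$ to the pair $(V,Y)$ destroys exactly the cancellation that makes the true divergence finite: $D_\lambda(W\|a+cW)$ is controlled by a \emph{ratio of expectations} over $Y$ of the two mixture densities, whereas the conditional decomposition replaces this by an \emph{expectation of conditional ratios}, which blows up. This is a fatal gap, not bookkeeping; no refinement of the ``exponentially tilted expectation of $e^{-\sigma Y''}$'' can rescue a step whose upper bound is infinite. The argument in \cite{BS19} instead bounds the shift cost $D_\lambda(Z\|Z+\delta)$ for the $\lln{\sigma}$ density directly, obtaining a Gaussian-type bound of the form $\tfrac{\lambda}{2}(e^{1.5\sigma^2}\delta)^2$, handles the dilation via the Gaussian component to get $\tfrac{\lambda}{2}(\eta/\sigma)^2$, and then combines the two with a triangle-type inequality for Gaussian-type R\'enyi bounds. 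That last combination is the ``subadditivity of effective magnitudes'' you assert without proof --- it is true but requires its own lemma --- yet even granting it, the proof would still fail at the conditioning step above.
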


Once we fix the desired CDP privacy parameter $\tfrac{\eps^2}{2}$, to apply \cref{lem:concentrated} we must still select $t, s, \sigma > 0$. We follow the selection method given in Sections 3.1.1 and 7.1 of~\citet{BS19}, omitting most of the details. First, for each of a sequence of values for $t$, we set $s = e^{-1.5\sigma^2}(\eps - t/\sigma)$ and numerically solve for $\sigma$ as a root of the polynomial $\tfrac{5\eps}t\sigma^3 - 5\sigma^2 - 1 = 0$. Repeating this process for each $t$ provides a collection of $(t, s, \sigma)$ triples without touching the database $X$. Given these triples $(t, s, \sigma)$, we finally select one to minimize variance $\tfrac{2\S_f^t(X)^2}{e^{-5\sigma^2}(\eps - t/\sigma)^2}$.

We pause to note that this last minimization of variance repeatedly touches $X$ to compute $\S_f^t(X)$ for different $t$. As this is not differentially private, we executed this non-private selection process once using data drawn from the standard Gaussian $N(0,1)$ and used the resulting values for \csmooth~experiments on our datasets. In practice, after starting from a wide range for $t$ of 150 logarithmically spaced values between $10^{-10}$ and $10$, we found that the values selected for $t$ clustered in a narrow subinterval across both data drawn from $N(0,1)$ and data drawn from our other experiment distributions. We therefore view the distribution-specific selection of $t$ as contributing relatively little to the final error of \csmooth.

\begin{wrapfigure}{L}{220pt}
\begin{center}
\vspace{-20pt}
\includegraphics[scale=0.5]{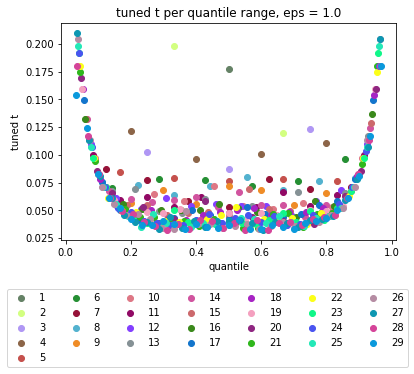}
\end{center}
\vspace{-15pt}
\caption{Tuned $t$ used for \csmooth~across different quantile ranges. For example, we used $t \approx 0.13$ for $m=3$, and $q = 0.5$ (the magenta dot in the middle of the plot).}
\end{wrapfigure}

In more detail, the actual $t$ selection process in our experiments is to use the variance-minimizing selection process described in \cref{subsec:algos} for each quantile in sets of quantiles ranging from $m=1$ to $m=29$ for $\eps = 1$. The range for $t$ is 50 logarithmically spaced values between 0.01 and 1. Each trial used 1000 samples drawn from $N(0,1)$ with data lower bound $-100$ and data upper bound $100$. Below, we record the $t$ selected for each quantile and quantile range, averaged across 5 trials. Each color represents a different set of quantiles, and each point for each color represents the $t$ selected for a single quantile.

\paragraph{\aggtree:} \aggtree 's hyperparameters are height $h$ and branching factor $b$. We tuned these parameters over the range $\{2, 3, \ldots, 15\}$ and $\{1, 2, \ldots, 15\}$ respectively. As with \csmooth, we used $N(0,1)$ data. The following two tables summarizes the values tuned over 50 trials of 1000 data points each.

\begin{table}[!ht]
    \centering
    \begin{tabular}{|*{16}{c|}}
    \hline
        \# quantiles & 1 & 2 & 3 & 4 & 5 & 6 & 7 & 8 & 9 & 10 & 11 & 12 & 13 & 14 & 15 \\
    \hline
	    height & 4 & 3 & 3 & 3 & 2 & 3 & 3 & 3 & 3 & 3 & 3 & 3 & 3 & 3 & 3 \\
    \hline
	    branching parameter & 4 & 6 & 6 & 9 & 14 & 10 & 7 & 7 & 10 & 10 & 8 & 7 & 7 & 12 & 10 \\
    \hline
    \end{tabular}
\end{table}

\begin{table}[!ht]
    \centering
    \begin{tabular}{|*{15}{c|}}
    \hline
        \# quantiles &16 & 17 & 18 & 19 & 20 & 21 & 22 & 23 & 24 & 25 & 26 & 27 & 28 & 29 \\
    \hline
	    height & 3 & 3 & 3 & 3 & 3 & 3 & 3 & 3 & 3 & 3 & 3 & 3 & 3 & 3 \\
    \hline
	    branching parameter & 10 & 10 & 10 & 7 & 10 & 10 & 7 & 10 & 12 & 12 & 12 & 10 & 10 & 12 \\
    \hline
    \end{tabular}
    \caption{Tuned height and branching parameters across number of quantiles.}
\end{table}

\section{Distance Error Experiments}
\label{sec:app_distance}
We conclude with experiments using a distance metric, which computes error as the average $\ell_1$ distance between the vectors of estimated and true quantiles: given quantile estimates $\hat o_1, \ldots, \hat o_m$ and true values $o_1, \ldots, o_m$, the error is $\|\hat o - o\|_1/m$. In this setting, we re-tune \aggtree 's hyperparameters using the distance metric, although the results are essentially the same:

\begin{table}[!ht]
    \centering
    \begin{tabular}{|*{16}{c|}}
    \hline
        \# quantiles & 1 & 2 & 3 & 4 & 5 & 6 & 7 & 8 & 9 & 10 & 11 & 12 & 13 & 14 & 15 \\
    \hline
	    height & 3 & 3 & 3 & 3 & 3 & 3 & 3 & 3 & 3 & 3 & 3 & 3 & 3 & 3 & 3 \\
    \hline
	    branching parameter & 8 & 6 & 6 & 10 & 9 & 9 & 10 & 10 & 12 & 12 & 10 & 10 & 7 & 10 & 8 \\
    \hline
    \end{tabular}
\end{table}

\begin{table}[!ht]
    \centering
    \begin{tabular}{|*{15}{c|}}
    \hline
        \# quantiles &16 & 17 & 18 & 19 & 20 & 21 & 22 & 23 & 24 & 25 & 26 & 27 & 28 & 29 \\
    \hline
	    height & 3 & 3 & 3 & 3 & 3 & 3 & 3 & 3 & 3 & 3 & 3 & 3 & 3 & 3 \\
    \hline
	    branching parameter & 8 & 10 & 7 & 10 & 10 & 12 & 8 & 10 & 12 & 10 & 7 & 5 & 7 & 5 \\
    \hline
    \end{tabular}
    \caption{Tuned height and branching parameters across number of quantiles (distance error).}
\end{table}

The final error plots appear below. Note that the algorithms that rely on the exponential mechanism (\apq~and \algo) at some point exhibit a sharp increase in error as $m$ grows. This is because these algorithms eventually end up sampling from a distribution that favors the extreme intervals containing the domain endpoints, and -- unlike misclassification error -- distance error strongly penalizes these outputs. Nonetheless, \algo~still achieves the strongest performance for a wide range of $m$.

\begin{figure}[!htbp]
    \begin{center}
        \includegraphics[scale=0.5]{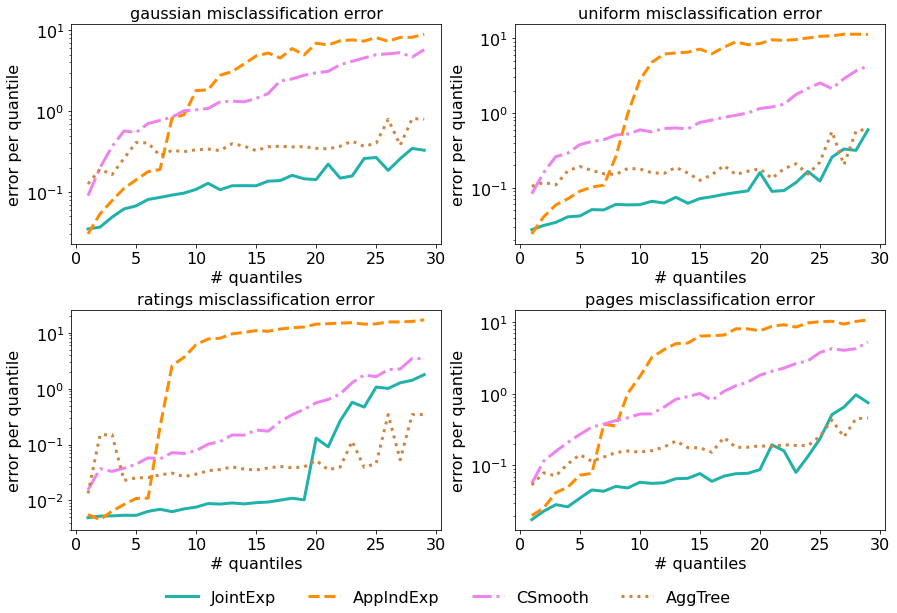}
    \end{center}
    \vspace{-15pt}
\caption{Distance error vs \# quantiles $m$ for $\eps=1$, averaged across 50 trials of 1,000 samples.}
\label{fig:time}
\end{figure}

\end{document}